%% file: ICML_26_paper.tex
\documentclass{article}

\usepackage{microtype}
\usepackage{graphicx}
\usepackage{subcaption}
\usepackage{booktabs} 

\usepackage{hyperref}

 \usepackage[preprint]{icml2026}

\usepackage{amsmath}
\usepackage{amssymb}
\usepackage{mathtools}
\usepackage{amsthm}

\usepackage[capitalize,noabbrev]{cleveref}

\usepackage[textsize=tiny]{todonotes}

\usepackage{algorithm}
\usepackage{algorithmic}

\usepackage{amsfonts}       
\usepackage{nicefrac}       
\usepackage{microtype}      

\usepackage{enumitem}
\usepackage{natbib}
\usepackage{crossreftools}
\usepackage{xspace}
\usepackage{delimset}

\input{macros}

\input{mathdefs}

\input{thmdefs}
\begin{document}

\icmltitlerunning{Optimal Regret for Policy Optimization in CMAB}

\twocolumn[
\icmltitle{Optimal Regret for Policy Optimization in Contextual Bandits}



\icmlsetsymbol{equal}{*}

\begin{icmlauthorlist}
\icmlauthor{Orin Levy}{yyy}
\icmlauthor{Yishay Mansour}{yyy,comp}
\end{icmlauthorlist}

\icmlaffiliation{yyy}{School of Computer Science and AI, Tel-Aviv University, Tel-Aviv, Israel}
\icmlaffiliation{comp}{Google Research, Tel-Aviv, Israel}

\icmlcorrespondingauthor{Orin Levy}{orinlevy@mail.tau.ac.il}
\icmlcorrespondingauthor{Yishay Mansour}{mansour.yishay@gmail.com}

\icmlkeywords{Machine Learning, ICML}

\vskip 0.3in
]

\icmltitlerunning{Optimal Regret for Policy Optimization in CMAB}



\printAffiliationsAndNotice{} 


\begin{abstract}
    We present the first high-probability optimal regret bound for a policy optimization technique applied to the problem of stochastic contextual multi-armed bandit (CMAB) with general offline function approximation. Our algorithm is both efficient and achieves an optimal regret bound of $\widetilde{O}(\sqrt{ K|\mathcal{A}|\log|\mathcal{F}|})$, where $K$ is the number of rounds, $\mathcal{A}$ is the set of arms, and $\mathcal{F}$ is the function class used to approximate the losses.
    Our results bridge the gap between theory and practice, demonstrating that the widely used policy optimization methods for the contextual bandit problem can achieve a rigorously-proved optimal regret bound. 
    We support our theoretical results with an empirical evaluation of our algorithm.
\end{abstract}

\section{Introduction}
\emph{Policy Optimization} (PO) methods are among the most practical techniques in Reinforcement Learning (RL), with impressive empirical success across a wide range of tasks \cite{DBLP:journals/corr/SchulmanWDRK17}. The applications of policy optimization span various domains, from recommendations \cite{jain2025personalized} to training robots \cite{DBLP:conf/iros/PetersS06,DBLP:journals/nn/PetersS08,DBLP:journals/jmlr/LevineFDA16,DBLP:conf/icra/GuHLL17} and control tasks \cite{DBLP:journals/nature/MnihKSRVBGRFOPB15,DBLP:journals/corr/SchulmanMLJA15,DBLP:journals/corr/LillicrapHPHETS15}, to its notable successes in Large Language Models (LLMs) fine-tuning  \cite{DBLP:journals/corr/abs-2009-01325,DBLP:journals/corr/abs-2209-14375, DBLP:conf/nips/Ouyang0JAWMZASR22}.
Motivated by the great success of context-based policy optimization methods in aligning LLMs with human preferences \cite{DBLP:conf/nips/RafailovSMMEF23}, we revisit the widely studied model of \emph{Contextual Multi-Armed Bandits} (CMABs; \citet{auer2002nonstochastic}). 
CMABs naturally model many real-world online tasks where external factors influence the outcome of a chosen strategy. This includes online advertisement and recommendation systems, where user preferences affect whether they click on the proposed item or not; healthcare, where a patient's medical history impacts their reaction to a treatment, and more. 

The CMAB problem describes an online decision-making scenario, where external factors affect the decision. We refer to these factors as the \emph{context}.
In each round $k$ of $K$-rounds game, the agent first observes a new context $c_k$ selected from a huge context space $\C$. Given the current context, the agent chooses an action $a_k$ from a finite set of actions $\A$, and suffers a loss $\ell_k $ associated with the context $c_k$ and the action $a_k$. 
We emphasize that the context determines the loss for each action, meaning that for different contexts, the optimal action-selection strategy might be completely different.

The agent aims to minimize the cumulative loss suffered throughout a $K$-round game. Since the action-selection strategy in CMAB is context-dependent, we consider it as a contextual \emph{policy}. 
We measure the performance of the agent in terms of \emph{regret}, which is the cumulative loss of the agent when compared to the cumulative loss of the best contextual policy. Hence, the agent's goal is to minimize regret.

Due to its high relevance, CMAB has been extensively studied, both theoretically and empirically.
On the theoretical side, CMAB has been studied under several assumptions and different learning setups, which we will elaborate on later. On the empirical side, the work of \citet{DBLP:journals/jmlr/BiettiAL21} 
is the most notable.

In this paper, we consider \emph{stochastic} CMAB, in which the context in each round is sampled from an unknown distribution. In this setting, the context space is typically very large, so the agent is likely to never observe the same context twice. Since the optimal action depends on the context, achieving sublinear regret is impossible without further assumptions. To address this, the stochastic CMAB literature has focused on the \emph{offline function approximation} framework; a minimal and realistic setting in which the desired regret rate of $\widetilde{O}(\sqrt{K})$ becomes achievable.

In this framework, the agent is provided with a \emph{realizable} and finite loss function class $\F$, where each function maps context-action pairs to an expected loss value. Realizability means that the true loss function is in the class $\F$. Moreover, the agent does not have direct access to the function class, but instead interacts with it through an offline regression oracle, which returns a candidate function that best fits the dataset of observed losses at each round.

Under these assumptions, state-of-the-art algorithms \cite{DBLP:conf/colt/FosterRSX21,simchi2022bypassing,xu2020upper} achieve an optimal regret bound of $\widetilde{O}(\sqrt{K \abs{\A} \log \abs{\F}})$ efficiently, assuming access to an efficient offline regression oracle. \texttt{UCCB} algorithm of \citet{xu2020upper} constructs confidence bounds and use a deterministic action selection rule that plays the best optimistic arm in each round. In contrast, \cite{simchi2022bypassing,DBLP:conf/colt/FosterRSX21} introduce inverse gap weighting (IGW)-based techniques, that define a stochastic policy where the probability of selecting each action is proportional to its estimated suboptimality under the current loss predictor. They prove, respectively, optimal worst-case and instance-dependent regret bounds.
However, despite the success of PO methods in other domains, the existing stochastic CMAB literature lacks a rigorously provable efficient algorithm that leverages PO within the offline function approximation framework to minimize regret.


PO methods are widely used in real-world applications, as discussed earlier, owing to their closed-form and interpretable update rules that balance exploration and exploitation without the need to solve additional optimization problems. Their exponential-weighting updates yield smoother and more stable policy adjustments than IGW or deterministic UCB-based methods, since they moderate the sensitivity of the policy update to changes in the loss estimator. Since for CMAB, these estimators are obtained from least-squares regression that can be highly sensitive to new samples, methods like IGW and \texttt{UCCB} often exhibit sharp fluctuations: IGW must rely on a rather involved low-switching mechanism to control such instability, while \texttt{UCCB} commits to an optimistic deterministic policy that forfeits the inherent exploration benefits of a stochastic policy. In contrast, PO’s stochastic and gradual updates naturally improve robustness to estimator variability, without requiring explicit low-switching scheduling mechanisms \citep{simchi2022bypassing,DBLP:conf/colt/FosterRSX21}.
For all these reasons, we believe that developing a rigorously-proved variant of PO for regret minimization in stochastic CMAB with offline function approximation-based loss estimators is an important and currently unaddressed gap in the literature, which we aim to close.

\noindent\textbf{Summary of our main contributions.}
Our primary goal is to develop a rigorously provable policy optimization method for regret minimization in stochastic CMABs that uses loss estimators constructed via offline function approximation.
To this end, we introduce \texttt{OPO-CMAB}, a new policy optimization-based algorithm for stochastic CMABs with general offline function approximation. Our algorithm is computationally efficient (assuming access to an efficient regression oracle) and achieves an optimal regret bound of
$\widetilde{O}(\sqrt{K |\mathcal{A}| \log |\mathcal{F}|})$, which holds with high probability. 

On the technical side, we address the challenge of ensuring sufficient exploration when the agent plays a PO stochastic policy that relies on offline function approximation-based loss estimators.
To obtain this, we generalize the counterfactual confidence bounds of \citet{xu2020upper} to stochastic policies and derive computable counterfactual exploration bonuses that integrate into the policy-optimization update rule.
To the best of our knowledge, this is the first work to achieve such guarantees without making additional assumptions regarding the function class (e.g., Eluder dimension \cite{DBLP:conf/icml/LevyCCM24}).

On the empirical side, we implemented our algorithm and evaluated its performance on the VowpalWabbit  benchmark suite \cite{DBLP:journals/jmlr/BiettiAL21}, demonstrating its competitiveness compared to SOTA baselines.

Our results show that policy optimization can be adapted to the stochastic CMAB setting with general function approximation, achieving both provable optimal regret and competitive empirical performance.

\subsection{Related Literature Review}

\noindent\textbf{Contextual Multi-Armed Bandits (CMAB).}
The study of CMABs has gained significant attention in the last decade, with various assumptions made about the contexts (i.e., adversarially or stochastically chosen), the function classes used (i.e., policy or loss/rewards classes), and the oracles used, if any. Existing literature can be categorized to two main streams.

The first stream focuses on learning a close-to-best policy within specific given finite policy class $\Pi$. It began with the well-known \texttt{EXP4} algorithm for adversarial CMAB \cite{auer2002nonstochastic}, followed by \citet{dudik2011efficient} and \citet{pmlr-v32-agarwalb14}, who explored computationally efficient methods for stochastic CMAB. These two works proved an optimal regret bound of $\widetilde{O}(\sqrt{K\abs{\A}\log\abs{\Pi}})$.

The second stream, which is also the stream this work belongs to, pertains to the realizable function approximation setting. 
In which, the agent has access to a realizable rewards or losses function class (used to approximate the rewards/losses for each context and action), which she accesses via an optimization oracle. 
\citet{NIPS2007_langford} considered stochastic contexts and obtained suboptimal regret.
Later, \citet{pmlr-v22-agarwal12} presented a regressor elimination algorithm and obtained a regret bound of $\widetilde{O}(\sqrt{K\abs{\A}\log\abs{\F}})$, where $\F$ represents a finite set of realizable contextual reward functions used to approximate the reward. The downside of this algorithm is that the runtime complexity of the algorithm scales with $\abs{\F}$. They also presented a matching lower bound of $\Omega(\sqrt{K\abs{\A}\log\abs{\F}/\log\abs{\A}})$ proves the optimality of their upper bound.
\citet{DBLP:conf/icml/FosterADLS18} initiated the research of regression oracle based efficient algorithms for stochastic CMAB. 
Finally, \citet{simchi2022bypassing} use inverse gap weighting (IGW) technique to derive optimal worst-case regret and \citet{DBLP:conf/colt/FosterRSX21} extends them to obtain instance-dependent regret bound.
\citet{xu2020upper} use upper confidence bounds for deterministic policies to also derive optimal worst-case regret.
The previously mentioned algorithms access the function class using a standard offline least-squares regression oracle, and derive algorithms that obtain optimal regret and can be implemented efficiently, where the efficiency is depending on the run-time complexity of the oracle in use. 
We, in contrast, apply a policy optimization technique over an optimistic loss approximation, computed for stochastic polices. Similarly, our algorithm is efficient, assuming an efficient offline regression oracle, and obtains optimal regret bounds.

Research has also considered adversarially chosen contexts, notably through the works of 
\cite{foster2020beyond, foster2021efficient,foster2021statistical, zhu2022contextual}, 
who introduced IGW-based policies using online regression oracles for accessing the function class $\F$. These efforts resulted in an optimal regret bound of $\widetilde{O}(\sqrt{K\abs{\A}\mathcal{R}_K(\mathcal{O})})$, where $\mathcal{R}_K(\mathcal{O})$ denotes the regret of the oracle used.

An additional related model is linear CMAB, with \citet{abe1999associative} being the first to consider this model, and the current state-of-the-art regret minimization algorithms were introduced by \citet{abbasi2011improved} and \citet{chu2011contextual}. Our framework is more general than linear function approximation.

 \noindent\textbf{PO Methods in Reinforcement Learning (RL).}
%
%
%
%
The theoretical analysis of PO techniques has been extensively studied, mostly in the basic RL setup of tabular Markov Decision Processes (MDPs), an RL environment with finite state and action spaces, dynamics, and rewards or losses associated with each state-action pair.
\citet{DBLP:journals/mor/Even-DarKM09} initiated the theoretical research of policy optimization methods in RL by proposing and analyzing the weighted-majority algorithm for MDPs in the setup of known dynamics, adversarial losses, and full feedback. 
Later, \citet{DBLP:conf/nips/NeuGSA10} extended their technique to handle bandit feedback. 
\citet{shani2020optimistic} presented the optimistic policy optimization algorithm and analyzed it in the case of unknown dynamics and bandit feedback while considering both stochastic and adversarial losses. For stochastic losses, they obtained rate-optimal regret; however, in the case of adversarial losses, their regret bound was sub-optimal. \citet{DBLP:conf/nips/LuoWL21} improved their result by applying policy optimization with refined exploration bonuses and obtained rate-optimal regret in the challenging case of unknown dynamics, adversarial losses, and bandit feedback.

Policy optimization has been applied to more complex setups in tabular MDPs, such as aggregated feedback \cite{DBLP:journals/corr/abs-2502-04004}, delayed feedback \cite{DBLP:conf/aaai/Lancewicki0M22, DBLP:conf/icml/Lancewicki0S23}, and many other setups with various assumptions.

Beyond the tabular setting, policy optimization has been applied to linear MDPs. \citet{DBLP:conf/icml/CaiYJW20} obtained optimal-rate regret in the unknown dynamics and adversarial losses model, assuming full feedback. \citet{DBLP:conf/nips/LuoWL21} also presented a $\widetilde{O}(K^{2/3})$ regret for the linear case, assuming access to a simulator, where $K$ is the number of episodes. Later, \citet{DBLP:conf/icml/0002LWZ23} improved the regret to $\widetilde{O}(\sqrt{K})$ under the same assumptions. \citet{DBLP:conf/icml/ShermanKM23} obtained a $\widetilde{O}(K^{6/7})$ regret bound efficiently, without assuming access to a simulator, which was later improved by \citet{DBLP:conf/iclr/LiuWZ24} to $\widetilde{O}(K^{4/5})$. Recently, \citet{DBLP:conf/icml/ShermanCKM24} obtained a rate-optimal regret of $\widetilde{O}(\sqrt{K})$ using a reward-free based warm-up. Later, \citet{DBLP:conf/nips/Cassel024} obtained similar regret bound without warm-up.

To the best of our knowledge, pure policy optimization has not been studied in RL literature beyond tabular and linear MDPs. Our work is the first attempt to use policy optimization as an exploration method on top of general function approximation for CMABs.

\section{Preliminaries and Notations}
We consider the problem of \emph{stochastic Contextual Multi-Armed Bandit (CMAB)}, in which we have a finite discrete set of arms $\A$, containing $\abs{\A}$ arms. 
In the online learning scenario, in each round $k$ of $K$ rounds game, a fresh context $c \in \C$ is sampled from an unknown distribution $\D$ over $\C$. In general, the context space $\C$ can be infinite, but, for mathematical convenience, we assume it is huge but finite\footnote{This assumption is standard in CMAB literature, as the extension to infinite context space is straightforward by applying appropriate discretization. The goal is to obtain a regret bound that is independent of the cardinality of $\C$.}.
For each context $c \in \C$ and action $a \in \A$ there is an associated stochastic loss with expectation $\ell(c,a) = \E[L(c,a)|c,a]$, where $L(c,a) \in [0,1]$. After observing the context, the agent selects an action to play and suffers the related stochastic loss.

We refer to the agent's action selection strategy as a \emph{(stochastic) contextual policy} $\pi: \C \to \Delta(\A)$ that maps contexts to a distribution over actions. We refer to $\pi(c,a)$ as the probability dictated by $\pi$ to play action $a$ given the context is $c$. 
The \emph{optimal policy} $\pi_\star$ satisfies for each $c \in \C$ that
$
    \pi_\star(c, \cdot) \in \argmin_{p \in \Delta(\A)} \brk[a]*{ p, \ell(c,\cdot) },
$ 
where $\brk[a]{\cdot,\cdot}$ denotes inner product between these two vectors.
The interaction protocol with stochastic CMAB is as follows. 
In each round $k = 1,2,\ldots, K$:
(1) A fresh context $c_k$ is sampled from $\D$; 
(2) The agent selects action $a_k \sim \pi_k(c_k,\cdot)$ to play;
(3) The agent suffer loss $\ell_k = L(c_k,a_k)$.
%

\noindent\textbf{Learning objective.} We measure the performance of our algorithm in terms of the \emph{(pseudo) regret} in comparison to the optimal policy $\pi_\star$, which is formally defined as 
$    \regret_{K}
    :=
    \sum_{k=1}^K \brk[a]*{\pi_k(c_k,\cdot ) - \pi_\star(c_k,\cdot), \ell(c_k,\cdot)}$, where $\brk[a]{\cdot,\cdot}$ denotes inner product.
The expected (pseudo) regret is then 
$    \E\regret_{K}
    :=
    \sum_{k=1}^K \E_{c_k}\brk[s]*{ \brk[a]*{\pi_k(c_k,\cdot) - \pi_\star(c_k,\cdot), \ell(c_k,\cdot)}},$
where the expectation is over the contexts sampled throughout the game.
Our goal is to develop an efficient, policy optimization-based regret minimization algorithm for stochastic CMAB.

\noindent\textbf{Additional notations.} Throughout the paper, we denote by $\abs{x}$ the absoulote value of any $x \in \R$. Also, for any two distributions $p,q$ over (finite) support $\X$ we denote by $d_{KL}(p||q)$ the Kullback-Leibler (KL) divergence between $p$ and $q$ that is defined as $d_{KL}(p||q):= \sum_{x \in \X} p(x) \log \frac{p(x)}{q(x)} $.

\subsection{Offline Function Approximation}
As previous literature for stochastic CMAB (e.g., \cite{simchi2022bypassing,xu2020upper,pmlr-v22-agarwal12,NIPS2007_langford}) shows, an offline function approximation is necessary to obtain non-trivial regret for stochastic CMAB.
Hence, in compatible with previous works, we assume access to a realizable and finite\footnote{The assumption is standard as the extension to infinite function classes is also straightforward, see \cite{DBLP:books/daglib/0033642} for offline regression with infinite function classes. We consider finite function classes for the sake of mathematical convenience and readability.} losses function class $\F \subseteq \C \times \A \to [0,1] $ via an offline least-squares regression oracle denoted $\OLSR^\F$.
\begin{assumption}[Realizability]\label{assumption:realizability}
    There exist $f_\star \in \F$ such that  $\forall (c,a) \in \C \times \A$ it holds that $f_\star(c,a) = \ell(c,a)$.
\end{assumption}
The function class is being accessed via an offline least-squares regression oracle, next defined. 
\begin{assumption}[Offline least-squares regression oracle]\label{assumption:offline-regression-oracle}
    Given a dataset $D_{n} = \{(c_i,a_i,\ell_i)\}_{i=1}^{n}$ we assume access to an offline oracle $\OLSR^\F$ that returns a candidate solution $\hat{f}_{n+1}$ to the following Empirical Risk Minimization (ERM) problem with respect to the square loss:
$        \hat{f}_{n+1} \in \argmin_{f \in \F} \sum_{i=1}^{n} \brk{f(c_i,a_i) - \ell_i}^2
        .$
\end{assumption}
Access to an offline least-square regression oracle is also a commonly used assumption in stochastic CMAB literature (see, e.g., \cite{simchi2022bypassing,xu2020upper}). The reason behind the choice of least-squares regression for the loss approximation is that least-squares regression is both compatible with approximating an expectation given stochastic examples and can be implemented efficiently for many function classes, with the clear example of a linear functions, for which the solution is given by a closed form.


\section{Algorithm and Main Result}
\texttt{OPO-CMAB}, described in \cref{alg:C-PPO-MAB}, is an optimistic policy optimization algorithm for stochastic CMABs.

An integral ingredient of our algorithm is the generalization of the
\emph{counterfactual exploration bonuses}~\cite{xu2020upper} to
stochastic policies.
Our exploration bonus for arm $a$ at round $k$ and context $c$ is
\begin{equation}
    b^\beta_k(c,a)
    =
    \min\brk[c]*{1,\frac{\beta/2}{1 + \sum_{i=1}^{k-1}\pi_i(c,a)}},
\label{eq:bonus}
\end{equation}
where $\beta = O(\sqrt{K})$ is a tunable parameter controlling the overall
exploration level.

The intuition behind these bonuses is that they quantify how well explored each arm is for the current context.
To see this, consider a counterfactual scenario in which the same context
$c$ had appeared in all previous rounds.
At each past round $i = 1, \dots, k-1$, the agent would have sampled arms according to the past policies 
$\pi_1(c, \cdot), \pi_2(c, \cdot), \ldots, \pi_{k-1}(c, \cdot)$.
Hence, the imaginary expected number of times arm $a$ would have been played for this context is
$    
    \E\left[\sum_{i=1}^{k-1}\pi_i(c,a)\,\middle|\,c,a\right]$. 
Thus, this counterfactual quantity serves as a realization of the expected number of times arm $a$ would have been chosen for $c$ up to round $k$.
When this quantity is large, the algorithm has implicitly gathered
substantial information about arm $a$ for the context $c$, and thus
assigns it a small bonus.
Conversely, when it is small, the bonus remains large, encouraging further exploration. When balanced by $\beta$, this bonus is $\approx \widetilde{O}(1/\sqrt{k})$, similar to the standard exploration bonus in stochastic MABs~\cite{slivkins2019introduction}.

Based on this optimistic exploration principle, our algorithm performs policy optimization by applying exponential policy improvements, as is standard in  RL literature (see, e.g.,~\cite{shani2020optimistic}).
Given the first observed context $c_1$, the agent samples an action uniformly at random from $\pi_1$, plays it, and updates the oracle with the resulting observation.
Then, for rounds $t=2,3,\ldots, K$, the agent computes the policies as described next. 
She observes the current context $c_t$ and counterfactually computes all past policies $\pi_1(c_t, \cdot), 
\ldots, \pi_{t-1}(c_t,\cdot)$ in order to compute the exploration bonus and subsequently derive $\pi_t(c_t,\cdot)$.
For each $k = 1,2,\ldots, t-1$, to compute the next policy $\pi_{k+1}(c_t,\cdot)$, the agent uses the approximated loss returned by the oracle at round $k$ (computed from the data collected in rounds $\{1, \ldots, k-1\}$) denoted by $\hat{f}_k$.
From this approximation, she subtracts the counterfactual exploration bonus from \cref{eq:bonus}, which depends on the probabilities of all past policies $\pi_1, \ldots, \pi_{k-1}$ for playing each specific action $a$ under the current context $c_t$.
The next policy is then defined by an exponential improvement of the current policy with respect to its optimistic loss approximation.

\begin{algorithm}[ht!]
    \caption{Optimistic Policy Optimization for CMAB (\texttt{OPO-CMAB})}\label{alg:C-PPO-MAB}
    \begin{algorithmic}[1]
        \STATE{\textbf{Inputs:} learning rate $\eta$ , number of episodes $K$, exploration parameter $\beta$. }
        \STATE{\textbf{Initialization:}
        $\pi_1$ is the uniform distribution over actions for all $ c\in \C $;
        $f_1 \in \F$ is chosen arbitrarily. 
        %
        }

        \FOR{ $t = 1,2,\ldots,K$}
        
            \STATE{Observe fresh context $c_t$.}

            \IF{$t \geq 2$}

                \FOR{$k = 1,\ldots, t-1$}

                \STATE{

                 Compute for all $a \in \A$:\\ ${\hat{\ell}_k (c_t, a) 
                    = 
                    \max\{0,\hat{f}_k(c_t,a)
                     -
                    b^{\beta}_k(c_t,a)\}} 
                $ 
                where
$                    b^\beta_k(c,a) 
                    =
                    \min \brk[c]*{
                    1
                    ,
                    \frac{\beta/2}{1 + \sum_{i=1}^{k-1}\pi_i(c,a)}
                    }
                    .$
                }
                \COMMENT{Policy Evaluation}
                
                \STATE{
                Compute for all $a \in \A$:\\
                $    \pi_{k+1}(c_t,a)
                    =
                    \frac{\pi_{k}(c_t,a)\exp{\brk*{-\eta \hat{\ell}_{k}(c_t,a)}}}
                    {\sum_{a'\in \A}\pi_{k}(c_t,a')\exp{\brk*{-\eta \hat{\ell}_{k}(c_t,a')}}}
                $
                }
                \COMMENT{Policy Improvement} 

            \ENDFOR{}
            \ENDIF{}
            
            \STATE{Sample action $a_t \sim \pi_t(c_t,\cdot)$; play $a_t$;  observe $\ell_t$.}
            
            \STATE{Update the loss approximation using the optimization oracle $\OLSR^\F$:
           $$
                \hat{f}_{t+1} \in 
                \arg\min_{f \in \F}
                \sum_{i=1}^{t} ( f(c_i,a_i) - \ell_i)^2
            $$
            } \label{ln:lsr-orace}
            
        \ENDFOR{}
    \end{algorithmic}
\end{algorithm}
The next theorem states the regret bound of \texttt{OPO-CMAB}.
\begin{theorem}[Regret bound]\label{thm:regret}
    For an appropriate choice of $\beta, \eta$, with probability at least $1-\delta$,
$$        \regret_K \leq \widetilde{O}\brk*{\sqrt{K\abs{\A}\log\brk*{\abs{\F}/\delta}}}
        .$$
\end{theorem}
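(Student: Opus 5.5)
We will decompose the regret per round into a policy optimization term and an estimation term, using the optimistic losses $\hat\ell_k$. For every round $k$ and context $c$,
\begin{align*}
\brk[a]*{\pi_k(c,\cdot)-\pi_\star(c,\cdot),\ell(c,\cdot)}
&= \brk[a]*{\pi_k(c,\cdot)-\pi_\star(c,\cdot),\hat\ell_k(c,\cdot)}
+ \brk[a]*{\pi_k(c,\cdot),\ell(c,\cdot)-\hat\ell_k(c,\cdot)} \\
&\quad + \brk[a]*{\pi_\star(c,\cdot),\hat\ell_k(c,\cdot)-\ell(c,\cdot)} .
\end{align*}

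\textbf{Policy optimization term.} The policies $\pi_k(c,\cdot)$ are exactly exponential-weights iterates run on the loss sequence $\hat\ell_1(c,\cdot),\hat\ell_2(c,\cdot),\dots\in[0,1]^{\A}$ for every fixed $c$. This holds counterfactually, since the algorithm recomputes them identically for any context. The standard mirror-descent/Hedge analysis therefore gives, for every $c$ and $T\le K$,
\[
\sum_{k=1}^T \brk[a]*{\pi_k(c,\cdot)-\pi_\star(c,\cdot),\hat\ell_k(c,\cdot)} \le \frac{\log\abs{\A}}{\eta}+\eta T .
\]
To pass from the fixed-$c$ bound to the realized contexts $c_k$, we take expectation over $c\sim\D$. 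Each summand is a deterministic function of $c$ given the history, so the realized sum differs from its conditional expectation by a bounded martingale difference sequence, handled by Azuma at cost $O(\sqrt{K\log(1/\delta)})$. Setting $\eta\approx\sqrt{\log\abs{\A}/K}$ makes this term $O(\sqrt{K\log\abs{\A}})$.

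\textbf{Optimism term.} We show that the third term is $\le 0$ on a high-probability event. The key is a generalization of the counterfactual confidence bound of \citet{xu2020upper} to stochastic policies: with probability $1-\delta$, for all $k$,
\[
\E_{c\sim\D}\sum_a\Big(\sum_{i<k}\pi_i(c,a)\Big)\big(\hat f_k(c,a)-f_\star(c,a)\big)^2 \lesssim \log(\abs{\F}K/\delta).
\]
This follows from the standard least-squares generalization bound: the squared-error excess on the data distribution generated by $\pi_1,\dots,\pi_{k-1}$ is $O(\log(\abs{\F}K/\delta))$ by Freedman plus a union bound over $\F$. From this we want a pointwise-in-expectation statement of the form $\abs{\hat f_k(c,a)-f_\star(c,a)}\le b_k^\beta(c,a)$, up to an additive slack whose cumulative contribution is $\widetilde O(\sqrt{K\abs{\A}\log\abs{\F}})$. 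The tool is AM-GM:
\[
\abs{\hat f_k-f_\star}\le \frac{\beta/2}{1+N_k}+\frac{(1+N_k)(\hat f_k-f_\star)^2}{2\beta},
\qquad N_k(c,a):=\sum_{i<k}\pi_i(c,a).
\]
Since $\hat\ell_k=\max\{0,\hat f_k-b_k^\beta\}$ and $\ell\ge 0$, the third term is at most $\sum_a\pi_\star(c,a)\,(1+N_k)(\hat f_k-f_\star)^2/(2\beta)$. The difficulty is that this is weighted by $\pi_\star$ rather than by $N_k$. So instead we will apply AM-GM to the combined second and third terms, treating the $\pi_\star$-weighted error via the bonus and the $\pi_k$-weighted error via the concentration bound. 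We expect this to be the main obstacle. Note that $(\hat f_k-f_\star)^2\le 1$, and that the ``$+1$'' in $1+N_k$ contributes $\abs{\A}/\beta$ per round. Summed over rounds this gives $K\abs{\A}/\beta$ plus $K\log\abs{\F}/\beta$ from the concentration bound.

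\textbf{Estimation term.} The second term is at most $\sum_a\pi_k(c,a)\big(\abs{\hat f_k-f_\star}+b^\beta_k\big)\le 2\sum_a\pi_k(c,a)\,b_k^\beta(c,a)+(\text{AM-GM slack})$. The bonus part, in expectation over $c$, is bounded by an elliptical-potential argument applied for each $(c,a)$:
\[
\sum_k\frac{\pi_k(c,a)}{1+N_k(c,a)}\le \log(1+K).
\]
Hence $\sum_k\sum_a\pi_k b_k\le \beta\abs{\A}\log K/2$. Collecting terms, the regret is $\widetilde O\big(\beta\abs{\A}+K(\abs{\A}+\log(\abs{\F}/\delta))/\beta+\sqrt{K\log\abs{\A}}\big)$.

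This is where the exact normalization matters. With $\beta$ scaled as $\sqrt{K\log(\abs{\F}/\delta)/\abs{\A}}$, the naive count gives $\sqrt{K\abs{\A}\log\abs{\F}}$ only if the $\abs{\A}$ factors are placed correctly. We therefore expect to refine the potential bound so that the bonus sum scales as $\beta\abs{\A}$ while the concentration cost scales as $K\log\abs{\F}/\beta$, which yields the stated $\widetilde O(\sqrt{K\abs{\A}\log(\abs{\F}/\delta)})$. Finally, martingale concentration converts every expectation over $c$ into the realized-regret statement.
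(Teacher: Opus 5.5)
Your outline matches the paper's: the same three-term split, a per-context OMD bound for the middle term, AM-GM against $1+N_k$, \cref{corl:oracle-convergence}, and the log-potential bound on the bonuses. As written, however, it does not reach the stated rate, and the trouble starts with term (\ref{reg:term-iii}).

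First, your claim that this term is $\le 0$ on a high-probability event is unsupported. The oracle controls only $\E_{c}\sum_a N_k(c,a)(\hat f_k(c,a)-f_\star(c,a))^2$, not $\abs{\hat f_k-f_\star}\le b^\beta_k$ pointwise. Your own computation yields only the nonnegative bound $\frac{1}{2\beta}\sum_a\pi_\star(c,a)(1+N_k(c,a))(\hat f_k(c,a)-f_\star(c,a))^2$. Second, the $\pi_\star$-weighting is not an obstacle. On the $N_k$ piece you can simply use $\pi_\star(c,a)\le 1$. This leaves exactly the counterfactual quantity $\sum_{i<k}\E_{c}\E_{a\sim\pi_i(c,\cdot)}(\hat f_k(c,a)-f_\star(c,a))^2$, which the oracle bounds by $O(\log(\abs{\F}K/\delta))$ for each $k$. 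No coupling with term (\ref{reg:term-i}) is needed; this is precisely the paper's argument.

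The genuine gap is how you account for the $+1$ in $1+N_k$. You charge it $\abs{\A}/\beta$ per round, so your collected bound contains $K\abs{\A}/\beta$. With $\beta\asymp\sqrt{K\log(\abs{\F}/\delta)/\abs{\A}}$ this equals $\abs{\A}^{3/2}\sqrt{K/\log(\abs{\F}/\delta)}$. Even optimizing $\beta$ in your final expression gives only $\widetilde O(\sqrt{K\abs{\A}(\abs{\A}+\log(\abs{\F}/\delta))})$. That exceeds the claim by a factor of order $\sqrt{\abs{\A}/\log(\abs{\F}/\delta)}$ whenever $\abs{\A}\gg\log\abs{\F}$.

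Your proposed remedy, refining the potential bound, aims at the wrong term: the bonus sum $\beta\abs{\A}\log(K+1)$ is already what is needed. The fix is to keep the probability weights on the $+1$ piece before discarding them. Use $\sum_a\pi_\star(c,a)(\hat f_k-f_\star)^2\le 1$ in term (\ref{reg:term-iii}) and $\sum_a\pi_k(c,a)(\hat f_k-f_\star)^2\le 1$ in term (\ref{reg:term-i}). Then the $+1$ costs only $K/(2\beta)$ per term, and only afterwards do you bound $\pi_\star,\pi_k\le 1$ on the $N_k$ piece. The total becomes $O(\beta\abs{\A}\log K+K\log(\abs{\F}K/\delta)/\beta+K/\beta+\sqrt{K\log\abs{\A}})$. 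The paper's choice of $\beta$ balances this to $\widetilde O(\sqrt{K\abs{\A}\log(\abs{\F}/\delta)})$.

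A minor point: the potential bound is $2\log(K+1)$, not $\log(K+1)$. Already $x_1=1$, $K=1$ gives $1>\log 2$; what holds on $[0,1]$ is $x\le 2\log(1+x)$.
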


\noindent\textbf{Discussion.}
Our algorithm integrates a policy optimization update rule with offline function approximation for loss prediction. We employ an optimistic approximation by incorporating exploration bonuses tailored for both offline function approximation and stochastic policies. Intuitively, the bonus assigned to each action reflects the counterfactual expected number of samples that would have been collected for that action if the current context had been observed throughout all previous rounds.
The loss estimators are clipped to $[0,1]$ to ensure that they are bounded.
The algorithm is computationally efficient with $O(K^2)$ run-time complexity, assuming access to an efficient oracle.

\section{Regret Analysis}
In this section, we analyze the regret of \cref{alg:C-PPO-MAB}, proving \cref{thm:regret}.
We follow the regret decomposition of \citet{shani2020optimistic} but adapt it to function approximation in the model of stochastic CMAB.
We start our analysis by noting that with probability at least $1-\delta/2$,
$
    \regret_{K} \leq \E\regret_{K} + 2\sqrt{2K\log(4/\delta)}.
$ 
As the contexts are iid, and the policies $\{\pi_k\}_{k=1}^K$ are determined completely by the history, the above is a direct implication of Azuma-Hoeffding's inequality. (Full proof is given in \cref{corl:high-prob-regret}).

Hence, we focus on bounding the expected regret, which will imply a high probability regret bound, by the above.
To obtain the desired bound, we decompose the expected regret as follows.
\begingroup\allowdisplaybreaks
\begin{align}
    \E\regret_{K}
     =
    &\sum_{k=1}^K \E_{c_k}\brk[s]*{ \brk[a]*{\pi_k(c_k,\cdot),\ell(c_k,\cdot)
    - 
    \hat{\ell}_k(c_k,\cdot)}}
    \label{reg:term-i}
    \\
    +&
    \sum_{k=1}^K \E_{c_k}\brk[s]*{ \brk[a]*{\pi_k(c_k,\cdot)
    -
    \pi_\star(c_k,\cdot),\hat{\ell}_k(c_k,\cdot)}} 
    \label{reg:term-ii}
    \\
    +&
    \sum_{k=1}^K \E_{c_k}\brk[s]*{ \brk[a]*{\pi_\star(c_k,\cdot),\hat{\ell}_k(c_k,\cdot)
    -\ell(c_k,\cdot)}}
    \label{reg:term-iii}
\end{align}
\endgroup
where term (\ref{reg:term-i}) stands for the expected approximation error of the loss with respect to the played policies $\{\pi_k\}_{k=1}^K$, term (\ref{reg:term-ii}) is the sub-optimality of the true optimal policy $\pi_\star$ on the approximated loss function in each round, and term (\ref{reg:term-iii}) is the expected approximation error of the loss with respect to the optimal policy. In what follows, we bound each of the terms separately.

We start our analysis by stating a uniform convergence guarantee over the sequence of regressors computed throughout the run. Lemma 5 in \citet{xu2020upper} (see \cref{lemma:xu-oracle-convergence}) presents such guarantee with respect to any function sequence. An immediate corollary of it implies a uniform convergence guarantee of an offline least squares regression oracle. The corollary is below, for proof see \cref{corl:oracle-convergence-appendix}.
\begin{corollary}[uniform convergence of offline least-squares regression]\label{corl:oracle-convergence}
Let $\hat{f}_2, \hat{f}_3,\ldots \in \F$ denote the sequence of least squares minimizers and let $\pi_1, \pi_2,\ldots$ denote the sequence of contextual played policies. The following holds for any $\delta \in (0,1)$ and $t \geq 2$ with probability at least $1-\delta/4$.
\begin{align*}
    &\sum_{i=1}^{t-1} 
    \E_{c_i}
    \brk[s]*{
    \E_{a_i \sim \pi_i(c_i,\cdot)}
    \brk[s]*{
    \brk*{ \hat{f}_t(c_i,a_i) - f_\star(c_i,a_i)}^2}}
    \\
    &\hspace{14em}\leq 
    68\log(4\abs{\F}t^3/\delta)  
    .
\end{align*}
\end{corollary}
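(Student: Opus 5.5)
The corollary will follow from Lemma 5 of \citet{xu2020upper} (\cref{lemma:xu-oracle-convergence}), which bounds the conditional expected squared error for an arbitrary sequence of functions in $\F$, combined with the fact that $\hat{f}_t$ minimizes the empirical square loss. The plan is to fix $t\geq 2$ and set $Z_i(f) := (f(c_i,a_i)-\ell_i)^2 - (f_\star(c_i,a_i)-\ell_i)^2$ for $f\in\F$.

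First I check the two standard facts behind such lemmas. Let $\mathcal{H}_{i-1}$ be the history before round $i$. By realizability (\cref{assumption:realizability}), $\E[\ell_i\mid c_i,a_i]=f_\star(c_i,a_i)$, so
\[
\E[Z_i(f)\mid \mathcal{H}_{i-1}] = \E_{c_i}\E_{a_i\sim\pi_i(c_i,\cdot)}\brk[s]*{(f(c_i,a_i)-f_\star(c_i,a_i))^2},
\]
because $\pi_i$ is $\mathcal{H}_{i-1}$-measurable and $c_i$ is fresh. Since losses and functions lie in $[0,1]$, we also have $|Z_i(f)|\le 1$ and $\E[Z_i(f)^2\mid\mathcal{H}_{i-1}]\le 4\,\E[Z_i(f)\mid\mathcal{H}_{i-1}]$.

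Next I apply Freedman's inequality to the martingale differences $\E[Z_i(f)\mid\mathcal{H}_{i-1}]-Z_i(f)$ for each fixed $f$, and take a union bound over $f\in\F$. Solving the resulting quadratic inequality gives, with high probability, for all $f\in\F$ simultaneously,
\[
\sum_{i<t}\E[Z_i(f)\mid\mathcal{H}_{i-1}] \le 2\sum_{i<t} Z_i(f) + C\log(\abs{\F}/\delta').
\]
This is exactly the content of Lemma 5 of \citet{xu2020upper}, and I would invoke it directly rather than rederive it. It is stated for any (possibly data-dependent) sequence of functions, which is precisely what allows plugging in the random $\hat{f}_t$.

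Then I plug in $f=\hat{f}_t$. By \cref{assumption:offline-regression-oracle}, $\hat{f}_t$ is the least-squares minimizer on the data from rounds $1,\dots,t-1$, and $f_\star\in\F$, so $\sum_{i<t}Z_i(\hat{f}_t)\le 0$. Hence the left-hand side of the corollary is at most $C\log(\abs{\F}/\delta')$.

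Finally, to make the bound hold for all $t\ge2$ at once, I take a union bound over $t$ with $\delta'_t=\delta/(4t^2)$ or similar, which is where the $t^3$ inside the logarithm and the constant $68$ come from. The total failure probability is then at most $\delta/4$.

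The only delicate point is that $\hat{f}_t$ is random and data-dependent. The uniform-over-$\F$ union bound handles this, and the rest is matching constants to the lemma's statement.
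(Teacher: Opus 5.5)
Your proposal is correct and follows the paper's proof. Both invoke Lemma 5 of \citet{xu2020upper}, which holds simultaneously for all $f\in\F$ and so lets you plug in the data-dependent $\hat f_t$; both use that $\pi_i$ is determined by $H_{i-1}$ to identify the conditional expectation with the corollary's left-hand side, and both drop the nonpositive empirical excess risk of the least-squares minimizer. The only difference is bookkeeping: that lemma is already uniform over $t\geq 2$ and the $t^3$ comes from inside it, so no extra union bound over $t$ is needed; the paper simply sets $\delta'=\delta/2$, turning $1-\delta'/2$ into $1-\delta/4$ and $\log(2\abs{\F}t^3/\delta')$ into $\log(4\abs{\F}t^3/\delta)$.
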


Next, we observe that the sum of bonuses is upper-bounded, in expectation over the context.
\begin{lemma}[Bonuses bound]\label{lemma:sum-of-bonuses-main}
    The following holds.
    \begin{align*}
        \sum_{k=1}^K \E_{c_k}\brk[s]*{\sum_{a\in \A} \pi_k(c_k,a)b^{\beta}_k(c_k,a)}
        \leq
        \beta\abs{\A}\log(K+1).
    \end{align*}
\end{lemma}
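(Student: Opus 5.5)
The plan is to exploit the counterfactual structure of the bonus. The policies $\pi_k(c,\cdot)$ are defined for \emph{every} context $c$, via the same recursion the algorithm runs on $c_t$. Moreover, $\pi_k$ depends only on $\hat f_1,\ldots,\hat f_k$, which are determined by the history of rounds $1,\ldots,k-1$. Since $c_k\sim\D$ is independent of that history, for each $k$ we can write
\begin{align*}
\E_{c_k}\brk[s]*{\sum_{a}\pi_k(c_k,a)b^\beta_k(c_k,a)}=\sum_{c\in\C}\D(c)\sum_{a}\pi_k(c,a)b^\beta_k(c,a),
\end{align*}
where the left side is understood as the expectation over $c_k$ conditioned on the history. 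Summing over $k$ and exchanging the finite sums reduces the claim to a pointwise bound: for every fixed context $c$ and arm $a$,
\begin{align*}
\sum_{k=1}^K\pi_k(c,a)\,b^\beta_k(c,a)\le \beta\log(K+1).
\end{align*}
Averaging this over $c\sim\D$ and summing over the $\abs{\A}$ arms then gives $\beta\abs{\A}\log(K+1)$.

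For the pointwise bound, fix $c,a$ and write $x_k=\pi_k(c,a)\in[0,1]$ and $S_{k}=\sum_{i=1}^{k}x_i$, with $S_0=0$. Dropping the $\min$ with $1$, it suffices to show
\begin{align*}
\sum_{k=1}^K \frac{x_k}{1+S_{k-1}}\le 2\log(1+S_K)\le 2\log(K+1),
\end{align*}
since the bonus contributes the factor $\beta/2$. This is the standard elliptical-potential / harmonic-sum argument. Because $x_k\le 1$, we have $1+S_k\le 2(1+S_{k-1})$, and hence
\begin{align*}
\frac{x_k}{1+S_{k-1}}\le \frac{2x_k}{1+S_k}\le 2\int_{S_{k-1}}^{S_k}\frac{ds}{1+s}=2\log\frac{1+S_k}{1+S_{k-1}}.
\end{align*}
Alternatively, one can use $\frac{u}{1+v}\le \log\frac{1+v+u}{1+v}\cdot\frac{u}{\log(1+u)}$ together with $\frac{u}{\log(1+u)}\le 2$ for $u\in[0,1]$. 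Telescoping the logarithms and using $S_K\le K$ finishes the argument.

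The only delicate point is the measurability/independence step. We must make sure that $\pi_k(c,\cdot)$, as a function of $c$, is fixed before $c_k$ is drawn. This holds because $\hat f_k$ uses only data from rounds $1,\ldots,k-1$, and the bonuses and exponential updates up to $\pi_k$ use only $\hat f_1,\ldots,\hat f_{k-1}$ and earlier policies. One should also state clearly whether the lemma is read with $\E_{c_k}$ as a conditional expectation, holding almost surely, or as a full expectation; the pointwise bound makes both readings hold, since it is deterministic. The constant also needs checking: the factor $2$ from the telescoping cancels the $1/2$ in $\beta/2$, which yields exactly $\beta\abs{\A}\log(K+1)$.
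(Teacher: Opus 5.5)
Your proposal is correct and matches the paper's proof: you drop the $\min$, use the fact that $c_k$ is independent of the history that fixes $\pi_1,\ldots,\pi_k$ to pass to a single fresh context, and apply the per-context, per-arm bound $\sum_{k}x_k/(1+S_{k-1})\le 2\log(K+1)$, so the factor $2$ cancels the $\beta/2$. The only difference is that you prove this log-sum inequality yourself (via $1+S_k\le 2(1+S_{k-1})$ and an integral comparison), whereas the paper cites it as Lemma C.1 of Levy et al.\ with $\lambda=1$.
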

This bound implied by applying an algebraic calculation yields a logarithmic upper bound over the sum. See \cref{lemma:sum-of-bonuses} for more details.
Using both bounds above, we derive the following upper bound of term (\ref{reg:term-i}).
\begin{lemma}[Term (\ref{reg:term-i}) bound]\label{lemma:term-i-bound-main}
    For any $\delta \in (0,1)$,\\ let $\beta = \sqrt{34 K \log(4\abs{\F}K^3/\delta) / \abs{\A}}$.
    Then, with probability at least $1-\delta/4$, it holds that
    \begingroup\allowdisplaybreaks
    \begin{align*}
        &\sum_{k=1}^K \E_{c_k}\brk[s]*{ \brk[a]*{\pi_k(c_k,\cdot),\ell(c_k,\cdot)
            - 
       \hat{\ell}_k(c_k,\cdot)}}
        \\
        &\hspace{11em}\leq
        \widetilde{O}\brk*{\sqrt{K\abs{\A}\log\brk*{\abs{\F}/\delta}}}
        .
    \end{align*}
    \endgroup
\end{lemma}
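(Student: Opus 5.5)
We bound term (\ref{reg:term-i}) by splitting $\ell - \hat{\ell}_k$ into a regression error and a bonus. Since $\ell = f_\star$ (\cref{assumption:realizability}) and $\hat{\ell}_k = \max\{0,\hat f_k - b^\beta_k\} \ge \hat f_k - b^\beta_k$, we have pointwise
\[
\ell(c,a) - \hat{\ell}_k(c,a) \le \bigl(f_\star(c,a) - \hat f_k(c,a)\bigr) + b^\beta_k(c,a) \le \abs{f_\star(c,a)-\hat f_k(c,a)} + b^\beta_k(c,a).
\]
Summing against $\pi_k(c_k,\cdot)$ and taking expectations, term (\ref{reg:term-i}) is at most the sum of two parts. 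The first is
\[
\sum_k \E_{c_k}\Bigl[\sum_a \pi_k(c_k,a)\abs{f_\star(c_k,a)-\hat f_k(c_k,a)}\Bigr].
\]
The second is the bonus sum, which \cref{lemma:sum-of-bonuses-main} bounds by $\beta\abs{\A}\log(K+1)$. The catch is that the bonus sum alone scales as $\beta\abs{\A} \approx \sqrt{K\abs{\A}\log\abs{\F}}$, which is already the target rate. So the bonuses must be charged in a way that absorbs the regression error, rather than bounding that error separately by a crude $\sqrt{K}\cdot$(oracle error) argument.

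The key step treats the regression part counterfactually, following \citet{xu2020upper}. Fix $k$ and a context $c$, and set $N_k(c,a) = 1+\sum_{i<k}\pi_i(c,a)$. Since contexts are i.i.d.\ and $\pi_i$ is a deterministic function of the history evaluated at any context, $\E_{c_i}[\cdot]$ in \cref{corl:oracle-convergence} equals $\E_{c\sim\D}[\cdot]$. Hence, with probability $1-\delta/4$, uniformly over $k$ (the corollary holds for each $t$; a union bound is absorbed into the $t^3$), we get
\[
\E_{c}\Bigl[\sum_a \textstyle\sum_{i<k}\pi_i(c,a)\,(\hat f_k-f_\star)^2(c,a)\Bigr] \le 68\log(4\abs{\F}K^3/\delta) =: 2\beta^2\abs{\A}/K.
\]
Adding the trivially bounded term for the ``$+1$'' (since $(\hat f_k - f_\star)^2 \le 1$), we obtain $\E_c[\sum_a N_k(c,a)(\hat f_k-f_\star)^2] \le 2\beta^2\abs{\A}/K + \abs{\A}$. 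Now apply AM-GM pointwise with weight $\lambda$:
\[
\abs{\hat f_k-f_\star} \le \frac{\lambda}{2}\,N_k(c,a)(\hat f_k-f_\star)^2 + \frac{1}{2\lambda N_k(c,a)}.
\]
Weight by $\pi_k(c,a)\le1$ and sum over $a$ and $k$. The first piece totals at most $\frac{\lambda}{2}K(2\beta^2\abs{\A}/K+\abs{\A})$. The second is $\frac1{2\lambda}\sum_k\E_c[\sum_a\pi_k(c,a)/N_k(c,a)]$, which is at most $\frac{\abs{\A}\log(K+1)}{2\lambda}$ by the same algebraic telescoping as in \cref{lemma:sum-of-bonuses-main}, since $\sum_k \pi_k/(1+\sum_{i<k}\pi_i)\le \log(1+\sum_k\pi_k)+1$.

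Optimizing at $\lambda \asymp 1/\beta$ gives a regression part of order $\beta\abs{\A}(1+\log(K+1)) + \abs{\A}K/\beta$. With the prescribed $\beta=\sqrt{34K\log(4\abs{\F}K^3/\delta)/\abs{\A}}$, both $\beta\abs{\A}\log K$ and $K\abs{\A}/\beta$ are $\widetilde O(\sqrt{K\abs{\A}\log(\abs{\F}/\delta)})$. Adding the bonus part gives the lemma.

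The main obstacle is justifying the step where the oracle bound, stated for $\hat f_t$ against the past policies $\pi_i$ at their own rounds, is transferred to a fresh context $c$. This requires the counterfactual policies $\pi_i(c,\cdot)$ to be exactly the functions the algorithm would compute, so that the expectation over $c_i$ can be renamed to $c$. It also requires the min with $1$ in the bonus and the clipping at $0$ to preserve the inequalities used above; both only help the bound.
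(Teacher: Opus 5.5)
Your route is essentially the paper's (clip, split off the bonus, AM-GM against $N_k(c,a)=1+\sum_{i<k}\pi_i(c,a)$, move the oracle bound to a fresh context, log-sum for the remainder). However, the final accounting has a genuine gap that costs a polynomial factor in $|\A|$. In the first AM-GM piece you drop the weight $\pi_k(c,a)$ on all of $N_k(c,a)$, including its constant part, and then use $\E_c[\sum_a(\hat f_k-f_\star)^2(c,a)]\le|\A|$. This produces the term $\frac{\lambda}{2}K|\A|$, i.e.\ $\frac{K|\A|}{2\beta}$ at $\lambda=1/\beta$. Your claim that this term is $\widetilde O(\sqrt{K|\A|\log(|\F|/\delta)})$ is false for the prescribed $\beta$:
\[
\frac{K|\A|}{\beta}=\frac{\sqrt{K}\,|\A|^{3/2}}{\sqrt{34\log(4|\F|K^3/\delta)}},
\]
which is $\sqrt{K|\A|\log(4|\F|K^3/\delta)}$ multiplied by $|\A|/(\sqrt{34}\log(4|\F|K^3/\delta))$. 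That factor is unbounded when $|\A|$ grows polynomially. Re-tuning $\lambda$ does not help. Your two pieces are $\lambda|\A|(\beta^2+K/2)$ and $|\A|(\log(K+1)+1)/(2\lambda)$, and their minimum over $\lambda$ is at least $|\A|\sqrt{K\log(K+1)}$. That is not $\widetilde O(\sqrt{K|\A|\log(|\F|/\delta)})$ once $|\A|$ is large compared to $\log(|\F|/\delta)$.

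The repair is one line, and it is what the paper does. Write $\Delta_k(c,a):=(\hat f_k(c,a)-f_\star(c,a))^2$. Keep $\pi_k$ on the constant part of $N_k$ and use $\pi_k\le 1$ only on the counterfactual part:
\[
\sum_a\pi_k(c,a)N_k(c,a)\Delta_k(c,a)\le\sum_a\pi_k(c,a)\Delta_k(c,a)+\sum_{i<k}\E_{a\sim\pi_i(c,\cdot)}\bigl[\Delta_k(c,a)\bigr].
\]
The first sum is at most $1$, because $\pi_k(c,\cdot)$ is a distribution and $\Delta_k\le 1$. So the first AM-GM piece totals $\frac{\lambda K}{2}+\lambda\beta^2|\A|$. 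At $\lambda=1/\beta$ the whole bound becomes $O(\beta|\A|\log(K+1)+K/\beta)$, which is the target rate; the resulting $\frac{K}{2\beta}$ is exactly the paper's $\frac{K}{2\beta}$ term. With this correction your argument matches the paper's up to presentation. The paper fixes $\lambda=1/\beta$ from the outset and, after peeling off $k=1$, folds the $\frac{\beta}{2N_k}$ piece into a second copy of the bonus via $\min\{1,x+y\}\le\min\{1,x\}+y$. You instead bound $\sum_k\E_c[\sum_a\pi_k(c,a)/N_k(c,a)]$ directly by the log-sum inequality, and both give the same order.
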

Below we provide a proof sketch; the complete proof appears in \cref{lemma:term-i-bound}.
\begin{proof}[Proof sketch]
    We begin our proof by observing that
    \begingroup\allowdisplaybreaks
    \begin{align}
        (\ref{reg:term-i}) 
        &\leq 
        \sum_{k=1}^K \E_{c_k}\brk[s]*{\brk[a]*{ \pi_k(c_k,\cdot), f_\star(c_k,\cdot)
        - \hat{f}_k(c_k,\cdot)}}
        \label{eq:term-i-eq}
        \\
        &+
        \sum_{k=1}^K \E_{c_k}\brk[s]*{\sum_{a\in \A} \pi_k(c_k,a)b^{\beta}_k(c_k,a)} 
        \label{eq:term-i-bonus}
        .
    \end{align}
    \endgroup
    Hence, we focus in upper bounding (\ref{eq:term-i-eq}). Since all functions in $\F$ are bounded in $[0,1]$, and by absolute value properties, we have that (\ref{eq:term-i-eq}) is upper  bounded by
    \begin{align*}
        1 \!+\! \sum_{k=2}^K \!\E_{c_k}\big[\sum_{a \in \A}\!\pi_k(c_k,a)\min\{1,|f_\star(c_k,a)-\hat{f}_k(c_k,a)|\}\big]
        .
    \end{align*}
    Henceforth, we are neglecting the $+1$ term. We then multiply the absolute value, which is the second argument of the $\min\{\cdot,\cdot\}$, by $ \sqrt{\frac{\beta}{\beta}\frac{1+\sum_{i=1}^{k-1}\pi_i(c_k,a)}{1+\sum_{i=1}^{k-1}\pi_i(c_k,a)}}$ and apply AM-GM inequality to upper bound the latter by
    \begingroup\allowdisplaybreaks
    \begin{align*}
        &\sum_{k=2}^K \E_{c_k}
        \Big[\sum_{a \in \A}\pi_k(c_k,a) 
        \min\Big\{1, \frac{\beta/2 }{1+\sum_{i=1}^{k-1}\pi_i(c_k,a)}
        \\
        &\hspace{1em}+ \frac{1}{2\beta}
        \brk*{1+\sum_{i=1}^{k-1}\pi_i(c_k,a)}\brk*{f_\star(c_k,a)-\hat{f}_k(c_k,a)}^2\Big\}
        \Big]
        .
    \end{align*}
    \endgroup
    By minimum properties and reordering, we obtain the latter is upper bounded by 
    \begingroup\allowdisplaybreaks
    \begin{align*}
        &\sum_{k=1}^K \E_{c_k}\brk[s]*{\sum_{a\in \A} \pi_k(c_k,a)b^{\beta}_k(c_k,a)}
        +\frac{K}{2\beta} 
        \\
        &+ 
       \frac{1}{2\beta}\sum_{k=2}^K \sum_{i=1}^{k-1}\E_{c_i}\brk[s]*{ \E_{a \sim \pi_i(c_i,\cdot)} 
       \brk[s]*{\brk*{f_\star(c_i,a)-\hat{f}_k(c_i,a)}^2}},
    \end{align*}
    \endgroup
    where the first term will be summed with (\ref{eq:term-i-bonus}), and together they are bounded using \cref{lemma:sum-of-bonuses-main}. The last term is bounded with a probability of at least $1-\delta/4$, by the oracle convergence guarantees of \cref{corl:oracle-convergence}.
    Summing all together, we obtain that with a probability of at least $1-\delta/4$,
    \begin{align*}
        (\ref{reg:term-i})
        \leq
        2\beta \abs{\A}\log(K+1) +\frac{K}{2\beta} 
        + \frac{68\log(4\abs{\F}K^3/\delta)K}{2\beta} 
        .
    \end{align*}
    By setting 
    $\beta$ as specified, we derive the lemma.
\end{proof}
\noindent
The upper bound on term~\eqref{reg:term-ii} follows directly from the standard analysis of Online Mirror Descent (\texttt{OMD}; see \cref{subsec:omd}). For full proof see \cref{lemma:term-ii}.
\begin{lemma}[Term (\ref{reg:term-ii}) bound]\label{lemma:term-ii-main}
    For the choice in $\eta = \sqrt{\log\abs{\A}/K}$, the following holds true.
    \begin{align*}
        &\sum_{k=1}^K \E_{c_k}\brk[s]*{ \brk[a]*{\pi_k(c_k,\cdot)
            -
            \pi_\star(c_k,\cdot),\hat{\ell}_k(c_k,\cdot)}}
            \\
        &\hspace{14em}\leq
        O(\sqrt{K\log\abs{\A}})
        .
    \end{align*} 
\end{lemma}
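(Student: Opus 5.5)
The plan is to fix an arbitrary context $c \in \C$ and view the counterfactual sequence $\pi_1(c,\cdot),\pi_2(c,\cdot),\ldots,\pi_K(c,\cdot)$ generated by \cref{alg:C-PPO-MAB} as the iterates of Hedge (OMD with negative-entropy regularizer) on $\Delta(\A)$, run on the loss vectors $\hat{\ell}_1(c,\cdot),\ldots,\hat{\ell}_K(c,\cdot)$. The key structural observation is that, for any $c$, the algorithm computes $\pi_{k+1}(c,\cdot)$ from $\pi_k(c,\cdot)$ and $\hat{\ell}_k(c,\cdot)$ by the same rule regardless of the round $t$ at which $c$ is observed. This holds because $\hat{f}_k$ depends only on data from rounds $1,\ldots,k-1$, and the bonus depends only on $\pi_1(c,\cdot),\ldots,\pi_{k-1}(c,\cdot)$. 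Consequently the policies $\pi_k(c,\cdot)$ are well defined for every $c$, and the sequence is a genuine exponential-weights trajectory. Moreover, $\hat{\ell}_k(c,a)\in[0,1]$: it is nonnegative by the clipping, and at most $\hat{f}_k(c,a)\le 1$.

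First I would apply the standard OMD/Hedge regret bound (\cref{subsec:omd}) pointwise in $c$. For any comparator $p\in\Delta(\A)$, in particular $p=\pi_\star(c,\cdot)$, and for losses in $[0,1]$ with $\pi_1$ uniform,
\begin{align*}
\sum_{k=1}^K \brk[a]*{\pi_k(c,\cdot)-\pi_\star(c,\cdot),\hat{\ell}_k(c,\cdot)} \le \frac{d_{KL}(\pi_\star(c,\cdot)\|\pi_1(c,\cdot))}{\eta} + \frac{\eta}{2}\sum_{k=1}^K\sum_{a}\pi_k(c,a)\hat{\ell}_k(c,a)^2 \le \frac{\log\abs{\A}}{\eta}+\frac{\eta K}{2}.
\end{align*}
With $\eta=\sqrt{\log\abs{\A}/K}$ this gives $O(\sqrt{K\log\abs{\A}})$ uniformly over all $c$.

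Second, I need to pass from this pointwise bound to the quantity in \cref{lemma:term-ii-main}, where round $k$ uses its own context $c_k$. Since $c_k$ is drawn from $\D$ independently of the history, and $\pi_k$, $\hat{\ell}_k$ are determined by rounds before $k$, I would write $\E_{c_k}[\brk[a]{\pi_k(c_k,\cdot)-\pi_\star(c_k,\cdot),\hat{\ell}_k(c_k,\cdot)}] = \E_{c\sim\D}[\brk[a]{\pi_k(c,\cdot)-\pi_\star(c,\cdot),\hat{\ell}_k(c,\cdot)}]$, holding the history fixed. Summing over $k$ and exchanging the sum with the expectation over a single fresh $c\sim\D$ turns the left-hand side into $\E_{c\sim\D}$ of the pointwise Hedge regret. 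This is bounded by the uniform bound from the first step.

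I expect the main obstacle to be justifying this exchange carefully. The functions $\pi_k(\cdot,\cdot)$ and $\hat{\ell}_k$ are random, since they depend on the observed data. The argument therefore has to be conditional on the history: the per-$k$ expectation is over the fresh context only, with all of $\hat{f}_1,\ldots,\hat{f}_K$ treated as given. This is valid because $\hat{f}_k$ is measurable with respect to the first $k-1$ rounds and the recursion defining $\pi_k(c,\cdot)$ involves only $\hat{f}_1,\ldots,\hat{f}_{k-1}$. Once the exchange is justified, the bound holds deterministically, for every realization, so no extra failure probability is incurred.
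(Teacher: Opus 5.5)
Your proposal is correct and follows the paper's proof essentially step for step. You identify the per-context update as OMD with the KL divergence (Hedge) on clipped losses in $[0,1]$ with uniform initialization, apply the fundamental OMD inequality to get $\log\abs{\A}/\eta + \eta K/2$ for every fixed $c$, and pass to the expectation over contexts by linearity, exactly as the paper does; your added observations that $\pi_k(c,\cdot)$ does not depend on the round at which $c$ is observed and that the bound holds for every realization of the history make explicit a step the paper leaves implicit.
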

\begin{proof}[Proof sketch]
    We observe that in each round of \cref{alg:C-PPO-MAB}, the policy we compute for each observed context $c \in \C$ separately, is the solution of the exactly same optimization problem of \texttt{OMD} algorithm, using the KL divergence for the Bregman's divergence term.
    As our loss estimators are bounded in $[0,1]$ and $\pi_1(c,\cdot)$ is uniform over the actions for every context $c$, we can apply the fundamental inequality of online mirror descent 
    (see \cref{thm:omd-orbuna}), to obtain the following for each fixed context $c$ separately.
    \begin{align*}
        \sum_{k=1}^K \brk[a]*{\hat{\ell}_k(c,\cdot),\pi_k(c_,\cdot)
        -
        \pi_\star(c,\cdot)}
        \leq
        \frac{\log\abs{\A}}{\eta} + \frac{\eta K}{2}
        .
    \end{align*}
    For $\eta = \sqrt{2\log\abs{\A}/K}$ we obtain for each context $c$ that this sum is upper bounded by 
    $ O(\sqrt{K\log\abs{\A}})$, which implies the lemma.
\end{proof}
Lastly, we bound term (\ref{reg:term-iii}) using similar proof technique to that of term (\ref{reg:term-i}). See  \cref{lemma:term-iii-bound} for proof.
\begin{lemma}[Term (\ref{reg:term-iii}) bound]\label{lemma:term-iii-bound-main}
   For any $\delta \in (0,1)$ let  $\beta = \sqrt{34K \log(4\abs{\F}K^3/\delta)/\abs{\A}}$.
    Then, the following holds with probability at least $1-\delta/4$.
    \begingroup\allowdisplaybreaks
    \begin{align*}
        &\sum_{k=1}^K \E_{c_k}\brk[s]*{ \brk[a]*{\pi_\star(c_k,\cdot), \hat{\ell}_k(c_k,\cdot)-\ell(c_k,\cdot)}}
        \\
        &\hspace{10em}\leq
        \widetilde{O}\brk*{\sqrt{K\abs{\A}\log\brk*{\abs{\F}/\delta}}}
        .
    \end{align*}
    \endgroup
\end{lemma}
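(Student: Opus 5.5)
We bound term~(\ref{reg:term-iii}) by an optimism argument. The plan is to show that, pointwise in $(c,a)$, the clipped optimistic estimate $\hat{\ell}_k(c,a)$ can exceed $\ell(c,a)=f_\star(c,a)$ only by an amount that is controlled by the same AM-GM quantity used in the proof of \cref{lemma:term-i-bound-main}. The crucial structural point is that this excess is nonzero only where the bonus fails to cover the regression error. As a result, the bonus term cancels instead of being paid for.

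\textbf{Step 1: pointwise excess.} Fix $k\ge 2$, a context $c$ and an arm $a$. Since $\ell(c,a)\ge 0$ and $\hat{\ell}_k(c,a)=\max\{0,\hat{f}_k(c,a)-b^\beta_k(c,a)\}$, we have
\[
\hat{\ell}_k(c,a)-\ell(c,a)\le \max\{0,\hat{f}_k(c,a)-f_\star(c,a)-b^\beta_k(c,a)\}.
\]
Write $N_k(c,a)=1+\sum_{i=1}^{k-1}\pi_i(c,a)$. We consider two cases.
\begin{itemize}
\item If $b^\beta_k(c,a)=1$, the excess is $0$, because $\hat{f}_k-f_\star\le 1$.
\item Otherwise $b^\beta_k(c,a)=\frac{\beta}{2N_k(c,a)}$. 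By AM-GM,
\[
|\hat{f}_k-f_\star|\le \frac{\beta}{2N_k}+\frac{N_k}{2\beta}(\hat{f}_k-f_\star)^2 .
\]
Hence the excess is at most $\frac{N_k(c,a)}{2\beta}(\hat{f}_k(c,a)-f_\star(c,a))^2$.
\end{itemize}
For $k=1$ the term is at most $1$.

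\textbf{Step 2: summing against $\pi_\star$.} Plugging in Step 1 gives
\[
(\ref{reg:term-iii})\le 1+\frac{1}{2\beta}\sum_{k=2}^K\E_{c_k}\Big[\sum_a \pi_\star(c_k,a)N_k(c_k,a)(\hat f_k-f_\star)^2(c_k,a)\Big].
\]
Use $\pi_\star\le 1$ and bound $\sum_a \pi_\star(c,a)(\cdot)\le\sum_a(\cdot)$, so that $(\hat f_k-f_\star)^2\le 1$ absorbs the ``$1+$'' part of $N_k$. This contributes at most $\frac{K|\A|}{2\beta}$.

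The remaining part is $\frac{1}{2\beta}\sum_{k}\sum_{i<k}\E_{c}\big[\sum_a\pi_i(c,a)(\hat f_k-f_\star)^2(c,a)\big]$. Since the contexts are i.i.d.\ and $\pi_i$ and $\hat f_k$ are measurable with respect to the history, $\E_{c_k}$ of a function of $c_k$ equals $\E_{c_i}$ of the same function of $c_i$, exactly as in \cref{lemma:term-i-bound-main}. Each inner sum over $i$ is therefore the quantity in \cref{corl:oracle-convergence}, which is at most $68\log(4|\F|K^3/\delta)$ with probability $1-\delta/4$.

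\textbf{Step 3: combining.} With probability at least $1-\delta/4$,
\[
(\ref{reg:term-iii})\le 1+\frac{K|\A|}{2\beta}+\frac{34K\log(4|\F|K^3/\delta)}{\beta}.
\]
With $\beta=\sqrt{34K\log(4|\F|K^3/\delta)/|\A|}$, the last term equals $\sqrt{34K|\A|\log(4|\F|K^3/\delta)}$. The middle term is $O(\sqrt{K}|\A|^{3/2}/\sqrt{\log})$, which is too large.

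\textbf{Main obstacle: the $|\A|$ factor from $\pi_\star$ not being weighted by played probabilities.} The fix I would try first is to avoid dropping $\pi_\star$ and to exploit that $\pi_\star(c,\cdot)$ is a distribution. This bounds the ``$1$'' part by $\frac{1}{2\beta}\sum_k\E\big[\sum_a\pi_\star(c_k,a)\big]=\frac{K}{2\beta}$. For the $\sum_{i<k}\pi_i$ part, use $\pi_\star(c,a)\le 1$ together with the fact that the oracle bound already carries the weights $\pi_i(c,a)$, so no extra $|\A|$ appears. All terms are then $\widetilde{O}(\sqrt{K|\A|\log(|\F|/\delta)})$, as claimed. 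Uniformity over $k$ requires \cref{corl:oracle-convergence} to hold simultaneously for all $t\le K$; I expect its $t^3$ factor was designed for a union bound of exactly this kind, and I would verify this in the appendix statement.
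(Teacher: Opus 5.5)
Your proposal is correct and, once you adopt the fix from your final paragraph (keep $\pi_\star$ so the ``$1$'' part of $N_k$ contributes $K/(2\beta)$, and use $\pi_\star\le 1$ only on the $\sum_{i<k}\pi_i$ part), it is essentially the paper's argument: the same AM-GM bound on $\hat f_k - f_\star - b^\beta_k$, the same i.i.d.\ context swap, and the same final bound $1+\frac{K}{2\beta}+\frac{34K\log(4\abs{\F}K^3/\delta)}{\beta}$. Your uniformity concern is resolved by \cref{lemma:xu-oracle-convergence}, which holds simultaneously for all $t\ge 2$ (the $t^3$ is indeed the union-bound factor); in the write-up, simply drop the crude $K\abs{\A}/(2\beta)$ route.
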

Using all the above, we prove \cref{thm:regret}.
\begin{proof}[Proof of \cref{thm:regret}]
    By taking a union bound over the events specified in \cref{lemma:term-i-bound-main,lemma:term-iii-bound-main}, and combine it with the result of \cref{lemma:term-ii-main}, we obtain that with probability at least $1-\delta/2$,
        $\E\regret_k \leq \widetilde{O}\brk*{\sqrt{K\abs{\A}\log\brk*{\abs{\F}/\delta}}},$
    which implies that with probability at least $1-\delta$, 
    $
        \regret_k \leq \widetilde{O}\brk*{\sqrt{K\abs{\A}\log\brk*{\abs{\F}/\delta}}}
    $
    .
\end{proof}

\section{Experiments}\label{Section:Expeiraments}
As is standard in the CMAB literature, we evaluate our algorithm using the VowpalWabbit (VW)\footnote{\url{https://vowpalwabbit.org/}} benchmark suite \cite{DBLP:journals/jmlr/BiettiAL21}, which implements the practical CMAB algorithms\footnote{Our code is publicly available at \url{https://github.com/orinL/OPO_CMAB_Experiments_code}.}. The state-of-the-art empirical comparison on this benchmark is by \citet{foster2021efficient}, who extensively evaluated the efficient, regression oracle based CMAB algorithms (\texttt{SquareCB}, \texttt{AdaCB}, \texttt{RegCB}) and a supervised learning baseline \texttt{Supervised} against their proposed algorithm \texttt{FastCB}, across various hyper-parameters and both square and logistic losses. They found \texttt{FastCB} with logistic loss performed best, followed by \texttt{SquareCB} (logistic and squared loss), while \texttt{AdaCB} and \texttt{RegCB} lagged behind. Following this setup, we compare our method \texttt{\texttt{OPO-CMAB}} to \texttt{FastCB}, \texttt{SquareCB}, \texttt{RegCB}, \texttt{AdaCB}, and the supervised baseline \texttt{Supervised}.

\noindent\textbf{Implementation Details.}
Following prior work, all algorithms are implemented in VW using an online regression oracle for both linear regression and logistic regression. We integrated \texttt{\texttt{OPO-CMAB}} into VW and implemented it as in \cref{alg:C-PPO-MAB}, with small changes: (i) the exploration bonus is set adaptively as $\beta_k = \gamma \sqrt{k/\abs{\A}}$ with $\gamma$ being a tuned hyper-parameter; (ii) $\eta$ is now also a tuned hyper-parameter and (iii) support also logistic regression for better accommodating to multi-class and multi-label tasks as in the tested datasets.

\noindent\textbf{Experimental setup and evaluation.}
Previous works \cite{foster2021efficient, DBLP:journals/jmlr/BiettiAL21} evaluate all algorithms on more than 500 multi-class and multi-label classification datasets from OpenML. 
Due to computational limitations, we evaluated all algorithms on 18 relatively small datasets selected among those.
As these are multiclass classification datasets, we simulate bandit feedback as the agent receives loss $0$ for a correct prediction and $1$ otherwise, similarly to the prior works.

\noindent\textbf{Performance} is measured by the average loss, also known as the \emph{Progressive Validation (PV)} loss \cite{DBLP:conf/colt/BlumKL99}, which for algorithm \texttt{A} and a dataset of $K$ examples defined as 
$
    L_{PV}(\texttt{A},K) = \frac{1}{K}\sum_{k=1}^K \ell_k(a_k)
    .
$   
Hence, the PV-loss directly reflects regret differences between algorithms.

Following previous work, we tuned the hyper-parameters of each algorithm by choosing the configuration that achieves the lowest final PV-loss for each dataset. The set of tested values for each parameter can be found in \cref{fig:all_params}. The parameters values  for the known algorithms were chosen as described in previous work. For our \texttt{\texttt{OPO-CMAB}}, we select $\gamma$ and $\eta$ values from a small set of relevant values on a discretized 10-scale grid. The final parameter values chosen for each algorithm and dataset can be found in \cref{fig:best-params}. 
We then run the selected configuration on 10 random permutations of the related dataset.
In the presented plots, we compare the averaged PV-loss decay curve of each algorithm as well as Standard deviation (Std) on three selected datasets, in which the supervised baseline converged to non-trivial final PV-loss.
In \cref{Appendix:Expiraments}, we support those plots by presenting a table summarizes the mean differences in the final PV-loss of each algorithm from the supervised baseline (see \cref{fig:differences}), as well as provide more related details. 

\begin{figure}[t!]
    \centering
    \includegraphics[width=0.473\textwidth]{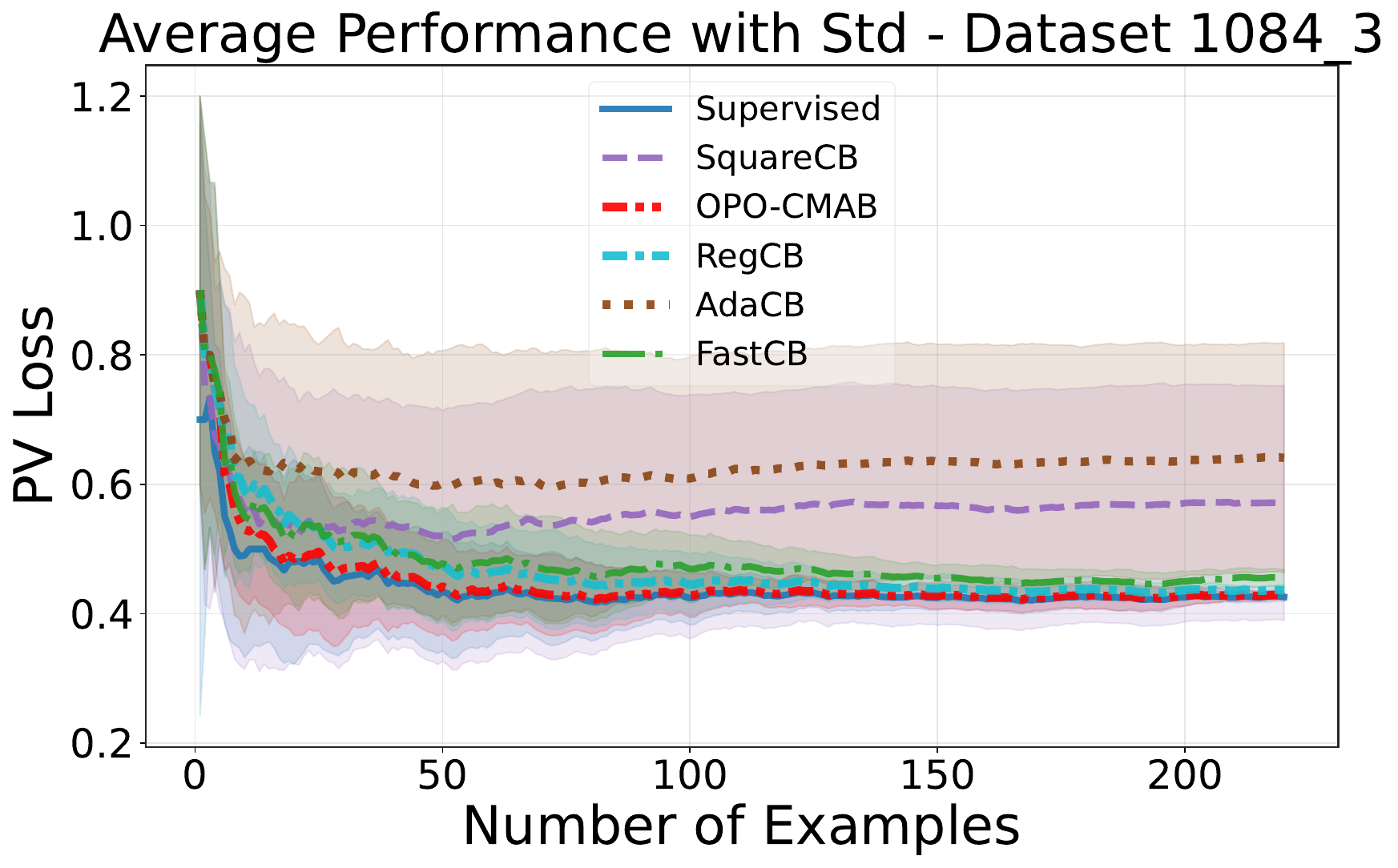}
    \includegraphics[width=0.473\textwidth]{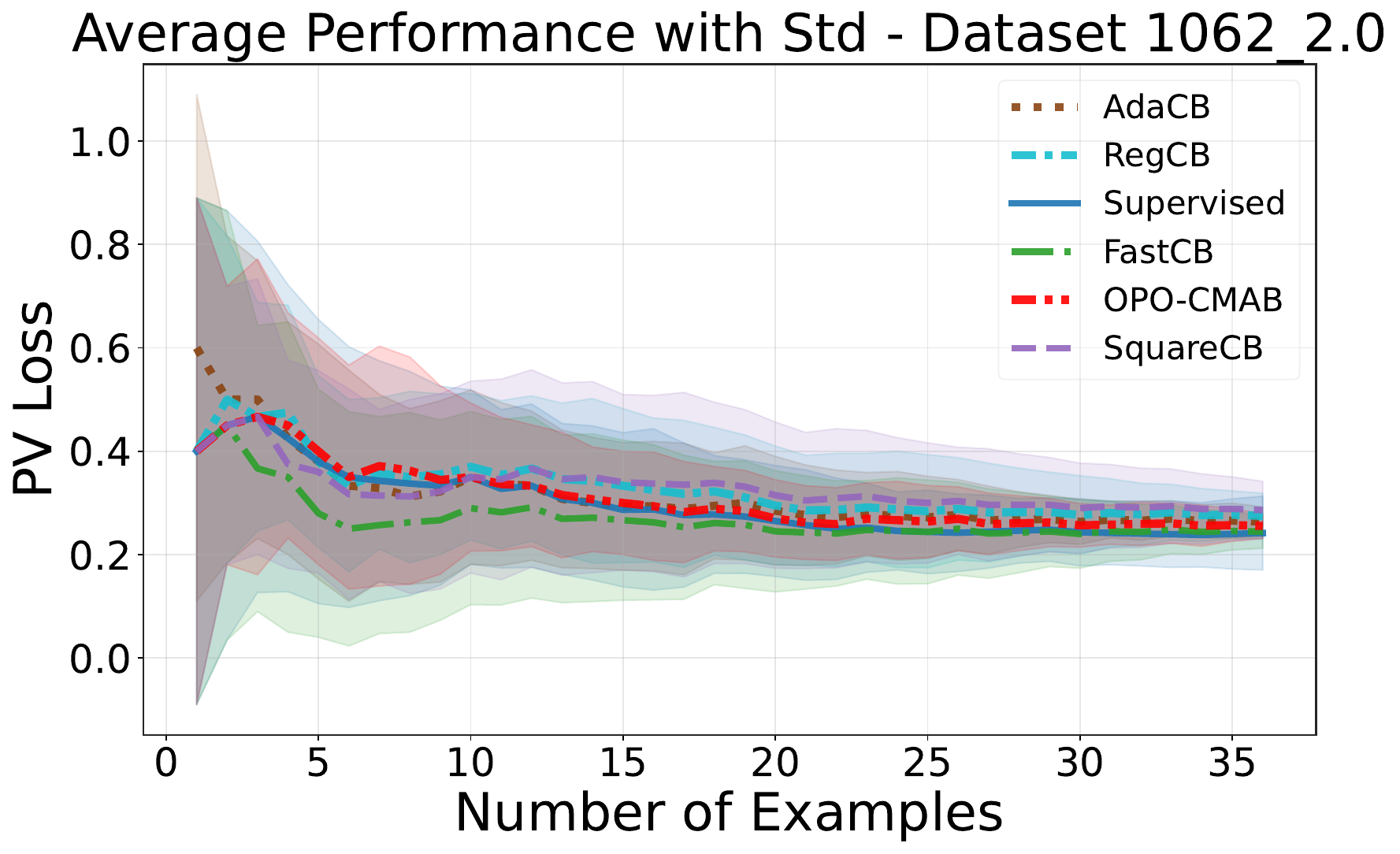}
    \includegraphics[width=0.473\textwidth]{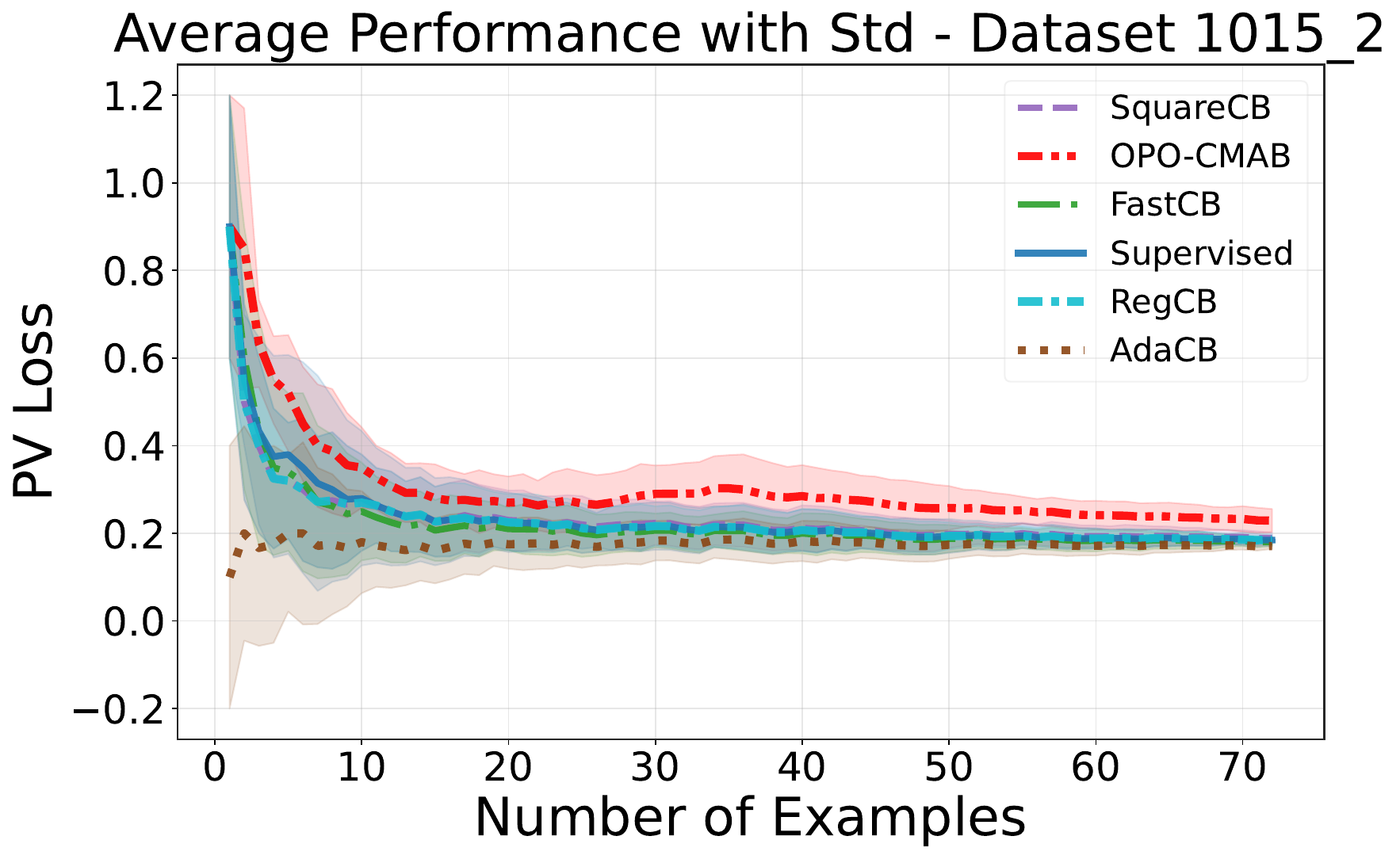}
    \caption{Averaged PV loss in Datasets 1084, 1062, 1015.}
    \label{fig:avg_performance}
\end{figure}

\noindent\textbf{Discussion of results in \cref{fig:avg_performance}.}
For dataset 1084, all algorithms appear to reach a plateau after 100 examples, which is expected for a $3$-armed CMAB instance. As anticipated, \texttt{Supervised} achieves the lowest PV-losses throughout the run, closely followed by \texttt{\texttt{OPO-CMAB}}, with \texttt{RegCB} slightly higher. In fourth place is \texttt{FastCB}. \texttt{SquareCB} ranks fifth, and \texttt{AdaCB} is last, both with a notable gap from the other algorithms.

For dataset 1062, it appears that all algorithms nearly reach a plateau fast, as expected for $2$-armed CMAB. \texttt{Supervised} achieves the lowest PV-losses throughout the run, followed by \texttt{FastCB} and then \texttt{\texttt{OPO-CMAB}}, which have slightly higher PV loss values. In fourth place is \texttt{AdaCB}, then \texttt{RegCB} with higher PV-losses, and finally \texttt{SquareCB}.
%
%

For dataset 1015, all algorithms appear to reach a plateau after 50 examples, as expected for a 2-armed CMAB. \texttt{AdaCB} achieves the lowest PV-loss throughout the run, followed closely by \texttt{FastCB}, \texttt{Supervised}, and \texttt{RegCB}, which have very similar PV-loss values; \texttt{SquareCB} performs slightly worse. For this dataset, our \texttt{\texttt{OPO-CMAB}} shows slightly higher PV-loss values than the others,  but its performance remains competitive.

In \cref{Appendix:Expiraments}, \cref{fig:differences}, we present a difference-from-supervised comparison that exemplifies the same trend. 
%
%

Overall, the experiments demonstrate that all algorithms exhibit similar performance on the tested datasets, with no significant evidence that any one algorithm outperforms the others. This supports our claim that our algorithm is competitive with previous CMAB algorithms. 
For more details see \cref{Appendix:Expiraments}.

\section{Conclusions and Discussion}

In this paper we present provably efficient regret minimization algorithm for stochastic CMABs with general offline function approximation, based on policy optimization. The algorithm is computationally efficient (given an efficient oracle) and achieves an optimal high probability regret bound of $\widetilde{O}(\sqrt{K |\mathcal{A}| \log |\mathcal{F}|})$.
Our results can be immediately extended to infinite contexts and infinite function classes using appropriate discretizations or learning dimensions \cite{DBLP:books/daglib/0033642}, which we left aside for clarity and readability.
A natural direction for future work is to extend our theoretical results beyond CMABs to more general settings such as contextual RL, RL with large state spaces, and other rich RL problems. We hope our results will inspire such further research.

Empirically, our method demonstrates competitive performance across a range of multiclass datasets, with no significant evidence that any algorithm consistently outperforms the others. However, our implementation of \texttt{\texttt{OPO-CMAB}} in VW is limited by its $O(K^2)$ runtime complexity. Since our primary contribution is theoretical, we focused on demonstrating applicability and competitiveness, rather than developing a practical implementation. Thus, we did not explore optimizations for \texttt{\texttt{OPO-CMAB}}.
Nevertheless, we believe our technique could inspire many efficient heuristics involving policy optimization updates with function approximation-based loss predictors. Potential directions include using noisy batching instead of exhaustive computation of exploration bonuses, 
heuristic look-back rather than recovering all history, and more. Developing such practical and scalable heuristics of our technique is a promising direction for future research. 

\section*{Impact Statement}
This paper presents work whose goal is to advance the field of Machine
Learning. There are many potential societal consequences of our work, none
which we feel must be specifically highlighted here.

\section*{Acknowledgments}
OL thanks Alon Peled-Cohen for fruitful discussions and Idan Schwartz for his assistance with running the experiments.  

This project has received funding from the European Research Council (ERC) under the European Union’s Horizon 2020 research and innovation program (grant agreement No. 882396), by the Israel Science Foundation,  the Yandex Initiative for Machine Learning at Tel Aviv University and a grant from the Tel Aviv University Center for AI and Data Science (TAD).

OL is also supported by the Google PhD Fellowship Award (2025).


\bibliography{refs}

\clearpage
\appendix
\onecolumn

\section{Proofs: Regret Analysis}
In this section, we provide the full analysis of the regret of \cref{alg:C-PPO-MAB}, proving \cref{thm:regret}. We follow the regret analysis structure  of \citet{shani2020optimistic} but adapt it to function approximation for stochastic CMAB.
We start our analysis by noting that with probability at least $1-\delta/2$,
\begin{align*}
    \regret_{K} \leq \E\regret_{K} + 2\sqrt{2K\log(4/\delta)}.
\end{align*}
As the contexts are iid, and the policies $\{\pi_k\}_{k=1}^K$ are determined by the history, the above is a direct implication of Azuma-Hoeffding's inequality. (Full proof is given in \cref{corl:high-prob-regret}).

Hence, we focus on bounding the expected regret, which will imply a high probability regret bound, by the above.
To obtain the desired bound, we decompose the expected regret as follows.
\begin{align*}
    \E\regret_{K}
    =&
    \sum_{k=1}^K \E_{c_k}\brk[s]*{ \brk[a]*{\pi_k(c_k,\cdot) - \pi_\star(c_k,\cdot), \ell(c_k,\cdot)}}
    \\
    =&
    \underbrace{\sum_{k=1}^K \E_{c_k}\brk[s]*{ \brk[a]*{\pi_k(c_k,\cdot),\ell(c_k,\cdot)}
    - 
    \brk[a]*{\pi_k(c_k,\cdot),\hat{\ell}_k(c_k,\cdot)}
    }}_{(\ref{reg:term-i})}
    \\
    &+
    \underbrace{ \sum_{k=1}^K \E_{c_k}\brk[s]*{ \brk[a]*{\pi_k(c_k,\cdot),\hat{\ell}_k(c_k,\cdot)}
    -
    \brk[a]*{\pi_\star(c_k,\cdot),\hat{\ell}_k(c_k,\cdot)}}}_{(\ref{reg:term-ii})}
    \\
    &+
    \underbrace{
    \sum_{k=1}^K \E_{c_k}\brk[s]*{ \brk[a]*{\pi_\star(c_k,),\hat{\ell}_k(c_k,\cdot)}
    -\brk[a]*{\pi_\star(c_k,\cdot),\ell(c_k,\cdot)}}}_{(\ref{reg:term-iii})}
\end{align*}
\noindent
We bound each term separately, but first, we upper-bound the sum of bonuses.
\begin{lemma}[Bonuses bound, restatement of \cref{{lemma:sum-of-bonuses-main}}]\label{lemma:sum-of-bonuses}
    The following holds.
    \begin{align*}
        \sum_{k=1}^K \E_{c_k}\brk[s]*{\sum_{a\in \A} \pi_k(c_k,a)b^{\beta}_k(c_k,a)}
        \leq
        \beta\abs{\A}\log(K+1).
    \end{align*}
\end{lemma}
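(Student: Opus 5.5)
The plan is to reduce the expectation over $c_k$ to a deterministic, per-context bound. The key observation is that $\pi_k(c,\cdot)$ depends on the history only through the regressors $\hat f_1,\dots,\hat f_{k-1}$, which are computed before $c_k$ is drawn. Moreover, the algorithm counterfactually defines the whole sequence $\pi_1(c,\cdot),\dots,\pi_K(c,\cdot)$ for every context $c$. Since $c_k\sim\D$ is independent of that history, we can write
\[
\E_{c_k}\Big[\sum_{a}\pi_k(c_k,a)b^\beta_k(c_k,a)\Big]=\E_{c\sim\D}\Big[\sum_a\pi_k(c,a)b^\beta_k(c,a)\Big].
\]
Exchanging the finite sum over $k$ with the expectation over a single $c\sim\D$, it suffices to show that for every fixed $c\in\C$ and every $a\in\A$,
\[
\sum_{k=1}^K \pi_k(c,a)\,b^\beta_k(c,a)\le \beta\log(K+1).
\]
Summing over the $\abs{\A}$ arms then gives the lemma.

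For the per-arm bound, fix $c$ and $a$. Write $x_k=\pi_k(c,a)\in[0,1]$ and $S_{k}=\sum_{i=1}^{k}x_i$, so that $S_0=0$. Drop the minimum and use $b^\beta_k\le \frac{\beta/2}{1+S_{k-1}}$. We then need
\[
\sum_{k=1}^K \frac{x_k}{1+S_{k-1}}\le 2\log(1+S_K)\le 2\log(K+1).
\]
The second inequality is immediate from $S_K\le K$. For the first, use $x_k\le 1$, which gives $1+S_{k-1}\ge \tfrac12(1+S_k)$, so each summand is at most $\frac{2x_k}{1+S_k}$. Then apply
\[
\frac{x_k}{1+S_k}\le\int_{S_{k-1}}^{S_k}\frac{dt}{1+t}=\log(1+S_k)-\log(1+S_{k-1}),
\]
which holds because $\frac{1}{1+t}\ge\frac{1}{1+S_k}$ on that interval. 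Telescoping over $k$ yields $2\log(1+S_K)$. Multiplying by $\beta/2$ gives exactly $\beta\log(K+1)$ per arm.

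The computation itself is routine. The step needing the most care is the first one, replacing $c_k$ by an independent generic $c$. We must argue that $\pi_k(c,\cdot)$ and $b^\beta_k(c,\cdot)$, as functions of $c$, are measurable with respect to the history up to round $k-1$, namely $\hat f_1,\dots,\hat f_{k-1}$. We also need that the same counterfactual sequence $\pi_1(c,\cdot),\dots,\pi_{k-1}(c,\cdot)$ enters the bonus regardless of which round the context $c$ is observed in. Both facts follow from the algorithm's construction, since it recomputes all past policies at the current context. Once they are in place, the bound holds pointwise in $c$, and hence deterministically after taking expectations. No probabilistic event is needed, consistent with the lemma's unconditional statement.
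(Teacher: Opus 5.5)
Your proposal is correct and matches the paper's proof: you drop the minimum in $b^\beta_k$, replace $c_k$ by a generic $c\sim\D$ (using that the counterfactual policies $\pi_1(c,\cdot),\dots,\pi_k(c,\cdot)$ are determined by the history before $c_k$ is drawn), and bound $\sum_{k=1}^K \pi_k(c,a)/(1+\sum_{i<k}\pi_i(c,a))\le 2\log(K+1)$ for each fixed $c$ and $a$. The only difference is that you prove this log-sum inequality inline, via $1+S_{k-1}\ge\frac12(1+S_k)$ and an integral comparison, whereas the paper cites Lemma C.1 of Levy et al.\ with $\lambda=1$.
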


\begin{proof}
    We first note that 
    \begin{align*}
        \sum_{k=1}^K \E_{c_k}\brk[s]*{
        \sum_{a\in \A} \pi_k(c_k,a)b^{\beta}_k(c_k,a)}
        &=
        \sum_{k=1}^K \E_{c_k}
        \brk[s]*{ 
        \sum_{a \in \A}
        \pi_k(c_k,a)
        \min\brk[c]*{1,\frac{\beta/2}{1+\sum_{i=1}^{k-1}\pi_i(c_k,a)}}
        }
        \\
        &\leq
        \frac{\beta}{2}\sum_{k=1}^K \E_{c_k}
        \brk[s]*{ 
        \sum_{a \in \A}
        \frac{\pi_k(c_k,a)}{1+\sum_{i=1}^{k-1}\pi_i(c_k,a)}}
        \\
        &=
        \frac{\beta}{2}\E_c\brk[s]*{
        \sum_{a \in \A} 
        \sum_{k=1}^K
        \frac{\pi_k(c,a)}{1+\sum_{i=1}^{k-1}\pi_i(c,a)}
        }
        .
    \end{align*}
    Next, we fix a context $c\in \C$ and bound 
    $$
        \sum_{a \in \A} \sum_{k=1}^K \frac{\pi_k(c,a)}{1+\sum_{i=1}^{k-1}\pi_i(c,a)}
        .
    $$
    Using Lemma C.1 from \citet{DBLP:conf/icml/LevyCCM24}, given in \cref{lemma:log-sums}, we obtain that for any fixed context it holds that
    $$
        \sum_{a \in \A} \sum_{k=1}^K \frac{\pi_k(c,a)}{1+\sum_{i=1}^{k-1}\pi_i(c,a)}
        \leq 
        2\abs{\A}\log(K+1)
        .
    $$
    By taking an expectation over the context on both sides, we obtain
    $$
        \E_c\brk[s]*{\sum_{a \in \A} \sum_{k=1}^K \frac{\pi_k(c,a)}{1+\sum_{i=1}^{k-1}\pi_i(c,a)}}
        \leq 
        2\abs{\A}\log(K+1)
        .
    $$
    Lastly, we plug the latter into our first inequality and conclude the lemma.    
\end{proof}

\noindent
We then move to bound each one of the three terms.
\begin{lemma}[Term (\ref{reg:term-i}) bound, restatment of \cref{lemma:term-i-bound-main}]\label{lemma:term-i-bound}
    For any $\delta \in (0,1)$, let $\beta = \sqrt{\frac{34 K \log(4\abs{\F}K^3/\delta) }{\abs{\A}}}$. Then, with probability at least $1-\delta/4$ it holds that
    \begin{align*}
    \sum_{k=1}^K \E_{c_k}\brk[s]*{ \brk[a]*{\pi_k(c_k,\cdot),\ell(c_k,\cdot)}
    - 
    \brk[a]*{\pi_k(c_k,\cdot),\hat{\ell}_k(c_k,\cdot)}
    }
    \leq
    \widetilde{O}\brk*{\sqrt{K\abs{\A}\log\brk*{\abs{\F}/\delta}}}
    .
    \end{align*}
\end{lemma}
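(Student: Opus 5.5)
The bound follows the route of the sketch of \cref{lemma:term-i-bound-main}, now with every step made precise. Throughout, write $N_k(c,a) := 1+\sum_{i=1}^{k-1}\pi_i(c,a)$.

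\textbf{Step 1 (remove the clipping).} Since $\hat{\ell}_k(c,a) = \max\{0,\hat f_k(c,a)-b^\beta_k(c,a)\} \ge \hat f_k(c,a) - b^\beta_k(c,a)$ and $\ell = f_\star$ by \cref{assumption:realizability}, we get pointwise
\[
\ell(c,a)-\hat{\ell}_k(c,a) \le f_\star(c,a)-\hat f_k(c,a) + b^\beta_k(c,a).
\]
Hence term (\ref{reg:term-i}) splits into the error part (\ref{eq:term-i-eq}) and the bonus part (\ref{eq:term-i-bonus}). \Cref{lemma:sum-of-bonuses} bounds the bonus part by $\beta\abs{\A}\log(K+1)$.

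\textbf{Step 2 (reduce the error part).} Round $k=1$ contributes at most $1$. For $k\ge 2$, both functions take values in $[0,1]$, so $f_\star-\hat f_k \le \min\{1,|f_\star-\hat f_k|\}$. By AM--GM, $xy \le \tfrac{\beta}{2}x^2+\tfrac{1}{2\beta}y^2$ with $x = 1/\sqrt{N_k}$ and $y = \sqrt{N_k}\,|f_\star-\hat f_k|$, which gives
\[
|f_\star-\hat f_k| \le \frac{\beta/2}{N_k(c_k,a)} + \frac{1}{2\beta}N_k(c_k,a)\bigl(f_\star(c_k,a)-\hat f_k(c_k,a)\bigr)^2 .
\]
Use $\min\{1,u+v\}\le \min\{1,u\}+v$ for $u,v\ge 0$. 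The first piece becomes exactly $\pi_k b^\beta_k$, which \cref{lemma:sum-of-bonuses} again bounds by $\beta\abs{\A}\log(K+1)$. For the second piece, split $N_k = 1+\sum_{i<k}\pi_i$. The "1" part contributes at most $\frac{1}{2\beta}\sum_k \E\sum_a \pi_k(c_k,a)\cdot 1 \le \frac{K}{2\beta}$, since $(f_\star-\hat f_k)^2\le 1$. The remaining part is
\[
\frac{1}{2\beta}\sum_{k=2}^K \E_{c}\Bigl[\sum_a \pi_k(c,a)\sum_{i<k}\pi_i(c,a)(f_\star-\hat f_k)^2(c,a)\Bigr].
\]
Bound $\pi_k(c,a)\le 1$. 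Then, because the contexts are i.i.d.\ and $\pi_i,\hat f_k$ are history-measurable, $\E_c$ of a function of a fresh context equals $\E_{c_i}$. This gives
\[
\frac{1}{2\beta}\sum_{k=2}^K\sum_{i=1}^{k-1}\E_{c_i}\E_{a\sim\pi_i(c_i,\cdot)}\bigl[(\hat f_k(c_i,a)-f_\star(c_i,a))^2\bigr].
\]

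\textbf{Step 3 (apply uniform convergence).} \Cref{corl:oracle-convergence} holds simultaneously for all $t\ge2$ with probability $1-\delta/4$; I would check that its proof is uniform over $t$, which the $t^3$ factor suggests. On that event, each inner sum is at most $68\log(4\abs{\F}K^3/\delta)$, so the last term is at most $\frac{34K\log(4\abs{\F}K^3/\delta)}{\beta}$.

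\textbf{Step 4 (collect and tune).} Altogether,
\[
(\ref{reg:term-i}) \le 1 + 2\beta\abs{\A}\log(K+1) + \frac{K}{2\beta} + \frac{34K\log(4\abs{\F}K^3/\delta)}{\beta}.
\]
With $\beta=\sqrt{34K\log(4\abs{\F}K^3/\delta)/\abs{\A}}$, each term is $O\bigl(\sqrt{K\abs{\A}\log(\abs{\F}K/\delta)}\log K\bigr)$. The $K/(2\beta)$ term is dominated, since it equals $\sqrt{K\abs{\A}/(136\log(\cdot))}$, which is of lower order.

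\textbf{Main obstacle.} The delicate step is Step 2's passage from $\E_{c_k}$ to $\E_{c_i}$. The regressor $\hat f_k$ depends on $c_i$ for $i<k$, so the identity must be read as follows: the corollary is stated for the conditional expectation over a fresh context at the population level, and $\E_{c_k}[\cdot]$ in the regret is that same population expectation. I would therefore state everything in terms of $\E_{c\sim\D}$ with history-measurable $\hat f_k,\pi_i$, matching the form of \cref{lemma:xu-oracle-convergence}.
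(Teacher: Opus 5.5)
Your proposal is correct and follows the paper's proof essentially step for step. You use the same clipping removal, the same AM--GM with weight $1+\sum_{i<k}\pi_i(c_k,a)$ followed by $\min\{1,u+v\}\le\min\{1,u\}+v$, the bonus lemma twice, the bound $\pi_k\le 1$ with the i.i.d.\ change of variables, and the uniform-convergence corollary; this gives the identical bound $1+2\beta|\mathcal{A}|\log(K+1)+\frac{K}{2\beta}+\frac{34K\log(4|\mathcal{F}|K^3/\delta)}{\beta}$. You read the expectations as population expectations over a fresh context with history-measurable $\hat f_k,\pi_i$, and the corollary as holding uniformly over $t$ (as Lemma 5 of Xu and Zeevi does); these are the same two facts the paper's proof relies on implicitly.
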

\begin{proof}
    The following holds with probability at least $1-\delta/4$.
    \begingroup\allowdisplaybreaks
    \begin{align*}
        &\sum_{k=1}^K \E_{c_k}\brk[s]*{ \brk[a]*{\pi_k(c_k,\cdot),\ell(c_k,\cdot)}
        - 
        \brk[a]*{\pi_k(c_k,\cdot),\hat{\ell}_k(c_k,\cdot)}}
        \\
        =&
        \sum_{k=1}^K \E_{c_k}\brk[s]*{ \sum_{a \in \A}\pi_k(c_k,a)\brk*{\ell(c_k,a)
        - \max\brk[c]*{0, \hat{f}_k(c_k,a) - b^{\beta}_k(c_k,a)}}}
        \\
        \leq&
        \sum_{k=1}^K \E_{c_k}\brk[s]*{ \sum_{a \in \A}\pi_k(c_k,a)\brk*{\ell(c_k,a)
        - \brk*{\hat{f}_k(c_k,a) - b^{\beta}_k(c_k,a)}}}
        \\
        =&
        \sum_{k=1}^K \E_{c_k}\brk[s]*{\sum_{a\in \A} \pi_k(c_k,a) \brk*{f_\star(c_k,a)
        - \hat{f}_k(c_k,a)}}
        +
        \sum_{k=1}^K \E_{c_k}\brk[s]*{\sum_{a\in \A} \pi_k(c_k,a)b^{\beta}_k(c_k,a)}
        \\
        \leq&
        \sum_{k=1}^K \E_{c_k}\brk[s]*{ \sum_{a \in \A}\pi_k(c_k,a)\abs{f_\star(c_k,a)-\hat{f}_k(c_k,a)}}
        +
        \sum_{k=1}^K \E_{c_k}\brk[s]*{\sum_{a\in \A} \pi_k(c_k,a)b^{\beta}_k(c_k,a)}
        \\
        \tag{Since both function are bounded in $[0,1]$, so is the absolute value}
        =&
        \sum_{k=1}^K \E_{c_k}\brk[s]*{\sum_{a \in \A}\pi_k(c_k,a)\min\brk[c]*{1,\abs{f_\star(c_k,a)-\hat{f}_k(c_k,a)}}}
        +
        \sum_{k=1}^K \E_{c_k}\brk[s]*{\sum_{a\in \A} \pi_k(c_k,a)b^{\beta}_k(c_k,a)}
        \\
        \leq&
        \sum_{k=2}^K \E_{c_k}\brk[s]*{\sum_{a \in \A}\pi_k(c_k,a)\min\brk[c]*{1,\abs{f_\star(c_k,a)-\hat{f}_k(c_k,a)}}}
        +
        \sum_{k=1}^K \E_{c_k}\brk[s]*{\sum_{a\in \A} \pi_k(c_k,a)b^{\beta}_k(c_k,a)} 
        +
        1
        \\
        =&
        \sum_{k=2}^K \E_{c_k}
        \brk[s]*{ \sum_{a \in \A}
        \pi_k(c_k,a)
        \min\brk[c]*{1,
        \sqrt{\frac{\beta}{\beta}\frac{1+\sum_{i=1}^{k-1}\pi_i(c_k,a)}{1+\sum_{i=1}^{k-1}\pi_i(c_k,a)}}\abs{f_\star(c_k,a)-\hat{f}_k(c_k,a)}}}
        \\
        +&
        \sum_{k=1}^K \E_{c_k}\brk[s]*{\sum_{a\in \A} \pi_k(c_k,a)b^{\beta}_k(c_k,a)}
        +
        1
        \\
        \leq&
        \tag{By AM-GM inequality}
        \sum_{k=2}^K \E_{c_k}
        \brk[s]*{ \sum_{a \in \A}\pi_k(c_k,a) \min\brk[c]*{1,
        \
        \frac{1}{2}
        \brk*{ 
        \frac{\beta }{1+\sum_{i=1}^{k-1}\pi_i(c_k,a)}
        + \frac{1}{\beta}
        \brk*{1+\sum_{i=1}^{k-1}\pi_i(c_k,a)}\brk*{f_\star(c_k,a)-\hat{f}_k(c_k,a)}^2}
        }}
        \\
        +& 
        \sum_{k=1}^K \E_{c_k}\brk[s]*{\sum_{a\in \A} \pi_k(c_k,a)b^{\beta}_k(c_k,a)}
        +
        1
        \\
        \tag{Since all terms in the $\min$ are positive for $\beta > 0$, holds by $\min\{\cdot,\cdot\}$ properties}
        \leq&
        \sum_{k=2}^K \E_{c_k}
        \brk[s]*{ 
        \sum_{a \in \A}\pi_k(c_k,a)
        \min\brk[c]*{1,\frac{\beta / 2}{1+\sum_{i=1}^{k-1}\pi_i(c_k,a)}}}
        +\frac{K}{2\beta} 
        \\
        + &
       \frac{1}{2\beta}\sum_{k=2}^K \E_{c_k}\brk[s]*{\sum_{i=1}^{k-1} \E_{a \sim \pi_i(c_k,\cdot)} 
       \brk[s]*{\brk*{f_\star(c_k,a)-\hat{f}_k(c_k,a)}^2}}
        + 
        \sum_{k=1}^K \E_{c_k}\brk[s]*{\sum_{a\in \A} \pi_k(c_k,a)b^{\beta}_k(c_k,a)}
        +
        1
        \\
        \leq&
        \tag{By the bonus definition in \cref{alg:C-PPO-MAB}, and linearity of expectation, as the contexts are iid.}
        2\sum_{k=1}^K \E_{c_k}\brk[s]*{\sum_{a\in \A} \pi_k(c_k,a)b^{\beta}_k(c_k,a)}
        +\frac{K}{2\beta} 
        + 
       \frac{1}{2\beta}\sum_{k=2}^K \sum_{i=1}^{k-1}\E_{c_i}\brk[s]*{ \E_{a \sim \pi_i(c_i,\cdot)} 
       \brk[s]*{\brk*{f_\star(c_i,a)-\hat{f}_k(c_i,a)}^2}} 
       +
       1
       \\
       \leq&
       \tag{Holds with probability at least $1-\delta/4$, by \cref{corl:oracle-convergence}} 
       2\sum_{k=1}^K \E_{c_k}\brk[s]*{\sum_{a\in \A} \pi_k(c_k,a)b^{\beta}_k(c_k,a)}
        +\frac{K}{2\beta} 
        + 
       \frac{68\log(4\abs{\F}K^3/\delta)K}{2\beta}
       +
       1
       \\
       \leq&
       \tag{By \cref{lemma:sum-of-bonuses}}
       2\beta \abs{\A}\log(K+1) +\frac{K}{2\beta} 
        + \frac{68\log(4\abs{\F}K^3/\delta)K}{2\beta} 
        + 1
        .
    \end{align*}
    Finally, by setting $\beta = \sqrt{\frac{34 K \log(4\abs{\F}K^3/\delta) }{\abs{\A}}}$ we obtain that term (\ref{reg:term-i}) is bounded as
    \begin{align*}
        \sum_{k=1}^K \E_{c_k}\brk[s]*{ \brk[a]*{\pi_k(c_k,\cdot),\ell(c_k,\cdot)}
        - 
        \brk[a]*{\pi_k(c_k,\cdot),\hat{\ell}_k(c_k,\cdot)}}
        \leq
        \widetilde{O}\brk*{\sqrt{K\abs{\A}\log\brk*{\abs{\F}/\delta}}}
        .
    \end{align*}
    \endgroup
\end{proof}
\noindent
We continue to bound term (\ref{reg:term-ii}).
\begin{lemma}[Term (\ref{reg:term-ii}) bound, restatment of \cref{lemma:term-ii-main}]\label{lemma:term-ii}
    For the choice in $\eta = \sqrt{\log\abs{\A}/K}$, the following holds true.
    \begin{align*}
        \sum_{k=1}^K \E_{c_k}\brk[s]*{ \brk[a]*{\pi_k(c_k,\cdot)
        -
        \pi_\star(c_k,\cdot),\hat{\ell}_k(c_k,\cdot)}}
        \leq
        O(\sqrt{K\log\abs{\A}})
        .
    \end{align*} 
\end{lemma}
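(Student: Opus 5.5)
The plan is to reduce the statement to a per-context online learning guarantee and then average over contexts. The key observation is that for every fixed context $c\in\C$, the sequence $\pi_1(c,\cdot),\pi_2(c,\cdot),\ldots,\pi_K(c,\cdot)$ produced by \cref{alg:C-PPO-MAB} (computed counterfactually whenever $c$ is observed) is exactly the sequence of iterates of exponential weights, i.e., \texttt{OMD} on the simplex with the KL divergence as Bregman divergence. It starts from the uniform distribution $\pi_1(c,\cdot)$ and is fed the loss vectors $\hat{\ell}_1(c,\cdot),\ldots,\hat{\ell}_{K}(c,\cdot)$. These loss vectors are well defined for every $c$, not only for the observed $c_k$: $\hat{\ell}_k(c,\cdot)$ depends on $\hat f_k$ (a function of the history up to round $k-1$) and on $\pi_1(c,\cdot),\ldots,\pi_{k-1}(c,\cdot)$. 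The clipping $\max\{0,\cdot\}$ together with $\hat f_k\in[0,1]$ guarantees $\hat{\ell}_k(c,a)\in[0,1]$.

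\textbf{Step 1.} Fix $c$ and apply the fundamental OMD inequality (\cref{thm:omd-orbuna}) with comparator $\pi_\star(c,\cdot)$. Using $d_{KL}(\pi_\star(c,\cdot)\|\pi_1(c,\cdot))\le\log\abs{\A}$ and the standard local-norm bound for exponential weights with losses in $[0,1]$, namely $\sum_a \pi_k(c,a)\hat\ell_k(c,a)^2\le 1$, this gives, deterministically and for every $c$,
\[
\sum_{k=1}^K \brk[a]*{\pi_k(c,\cdot)-\pi_\star(c,\cdot),\hat{\ell}_k(c,\cdot)} \le \frac{\log\abs{\A}}{\eta}+\frac{\eta K}{2}.
\]
Nonnegativity of the losses is what allows the second-order term to be controlled with the simple $e^{-x}\le 1-x+x^2/2$ inequality.

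\textbf{Step 2.} Pass from the per-context bound to the term (\ref{reg:term-ii}) expectation. The comparison must be made for the same context at every round, while term (\ref{reg:term-ii}) evaluates round $k$ at a fresh context $c_k$. Since $c_k$ is drawn from $\D$ independently of the history that determines $\pi_k$ and $\hat\ell_k$ as functions, we have
\[
\E_{c_k}\brk[s]*{\brk[a]*{\pi_k(c_k,\cdot)-\pi_\star(c_k,\cdot),\hat\ell_k(c_k,\cdot)}} = \E_{c\sim\D}\brk[s]*{\brk[a]*{\pi_k(c,\cdot)-\pi_\star(c,\cdot),\hat\ell_k(c,\cdot)}}.
\]
Summing over $k$ and exchanging the sum with $\E_{c\sim\D}$ reduces term (\ref{reg:term-ii}) to $\E_c$ of the Step 1 left-hand side. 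That quantity is at most $\log\abs{\A}/\eta+\eta K/2$ pointwise, hence also in expectation.

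\textbf{Step 3.} Plug in $\eta=\sqrt{\log\abs{\A}/K}$, which gives $\tfrac32\sqrt{K\log\abs{\A}}=O(\sqrt{K\log\abs{\A}})$.

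The main obstacle is the justification in Step 2. The losses $\hat\ell_k(c,\cdot)$ are adaptive: they depend on past policies at the same context through the bonus, and on $\hat f_k$ through the data. The OMD inequality holds for arbitrary, even adaptively chosen, bounded loss sequences, so adaptivity is harmless as long as the per-context bound is deterministic. I would therefore state it for every realization and every $c$ before taking any expectation, so that exchanging expectation and summation is purely linear and no concentration argument is needed.
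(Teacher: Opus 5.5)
Your proposal is correct and follows the paper's route: for each fixed context you apply the fundamental KL online-mirror-descent inequality (uniform $\pi_1$, clipped losses in $[0,1]$), which gives $\log\abs{\A}/\eta+\eta K/2$ deterministically, and then you average over contexts and tune $\eta$. Your Step 2 identifies $\E_{c_k}$ with $\E_{c\sim\D}$ of a history-determined function before exchanging the sum and the expectation; this only makes explicit the step the paper compresses into ``taking an expectation over the contexts and using linearity of expectation.''
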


\begin{proof}
    %
    %
    Observe that the policy we compute in \cref{alg:C-PPO-MAB} for each observed context $c \in \C$ is the solution of the same optimization problem as in Online Mirror Descent (OMD) algorithm (see \cref{subsec:omd} for more information), using the KL-divergence for the Bregman divergence term.

    Formally, our policy satisfies the following for every context $c \in \C$ and round $k \in [K]$.
    \begin{equation}\label{eq:omd-kl}
        \pi_{k+1}(c, \cdot)
        \in 
        \argmin_{\pi \in \Delta(\A)} \eta \brk[a]*{\hat{\ell}_k(c, \cdot), \pi -\pi_{k}(c, \cdot) } 
        +d_{KL}(\pi||\pi_{k}(c, \cdot))
        .
    \end{equation}
    Hence, term $(\ref{reg:term-ii})$ is the linear approximation term of the policy computed by the OMD algorithm, in expectation over the contexts.
    
    As our loss estimators are bounded in $[0,1]$ and $\pi_1(c,\cdot)$ is uniform over the actions for every context $c \in \C$, we can apply the fundamental inequality of online mirror descent for the KL divergence (Theorem 10.4 in \cite{orabona2019modern}, V1), to obtain the following for each fixed context $c \in \C$ separately.
    \begin{align*}
         \sum_{k=1}^K \brk[a]*{\hat{\ell}_k(c,\cdot),\pi_k(c,\cdot)
        -
        \pi_\star(c,\cdot)} 
        \leq
        \frac{\log\abs{\A}}{\eta} +\frac{\eta}{2}\sum_{k=1}^K \pi_k(c,a)\underbrace{\hat{\ell}_k(c,a)^2}_{\leq 1}
        \leq
        \frac{\log\abs{\A}}{\eta} + \frac{\eta K}{2}
        .
    \end{align*}
    For $\eta = \sqrt{2\log\abs{\A}/K}$ we obtain for each context $c \in \C$ that
    \begin{align*}
         \sum_{k=1}^K \brk[a]*{\hat{\ell}_k(c,\cdot),\pi_k(c,\cdot)
        -
        \pi_\star(c,\cdot)} 
        \leq
        O(\sqrt{K\log\abs{\A}})
        .
    \end{align*}
    Since inner product of real vectors is symmetric, 
    \begin{align*}
        \sum_{k=1}^K \brk[a]*{\pi_k(c,\cdot)
        -
        \pi_\star(c,\cdot),\hat{\ell}_k(c,\cdot)} 
        \leq
        O(\sqrt{K\log\abs{\A}})
        .
    \end{align*}
    By taking an expectation over the contexts on both sides of the inequality, using linearity of expectation we obtain
    \begin{align*}
         \sum_{k=1}^K \E_{c_k}\brk[s]*{\brk[a]*{\pi_k(c_k,\cdot)
        -
        \pi_\star(c_k,\cdot),\hat{\ell}_k(c_k,\cdot)} }
        \leq
        O(\sqrt{K\log\abs{\A}})
        ,
    \end{align*}
    as desired.
\end{proof}
\noindent
Lastly, we bound term (\ref{reg:term-iii}).
\begin{lemma}[Term (\ref{reg:term-iii}) bound, restatment of \cref{lemma:term-iii-bound-main}]\label{lemma:term-iii-bound}
   For any $\delta \in (0,1)$ let  $\beta = \sqrt{\frac{34 K \log(4\abs{\F}K^3/\delta) }{\abs{\A}}}$.
    Then, the following holds with probability at least $1-\delta/4$.
    \begin{align*}
        \sum_{k=1}^K \E_{c_k}\brk[s]*{ \brk[a]*{\pi_\star(c_k,\cdot), \hat{\ell}_k(c_k,\cdot)-\ell(c_k,\cdot)}}
        \leq
        \widetilde{O}\brk*{\sqrt{K\abs{\A}\log\brk*{\abs{\F}/\delta}}}
        .
    \end{align*}
\end{lemma}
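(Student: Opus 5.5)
The plan is to mirror the proof of \cref{lemma:term-i-bound}, with two changes. Optimism now works in our favor, but the weighting is by $\pi_\star$ rather than by the played policy, so a change of measure is needed. Fix $k$ and a context $c$. By the definition of $\hat{\ell}_k$ and since $\ell(c,a)\ge 0$, we have for every $a$
\[
\hat{\ell}_k(c,a)-\ell(c,a) = \max\{0,\hat{f}_k(c,a)-b^\beta_k(c,a)\}-\ell(c,a) \le \max\{0,\hat{f}_k(c,a)-f_\star(c,a)-b^\beta_k(c,a)\}.
\]
Indeed, if $\hat f_k - b^\beta_k\le 0$ the left side is $\le 0$; otherwise it equals $\hat f_k-b_k^\beta-f_\star$. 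Hence term (\ref{reg:term-iii}) is at most
\[
\sum_{k}\E_{c_k}\Big[\sum_a \pi_\star(c_k,a)\max\{0,\abs{\hat f_k(c_k,a)-f_\star(c_k,a)}-b^\beta_k(c_k,a)\}\Big].
\]
The aim is to show that the bonus dominates the error term, up to terms controlled by the oracle guarantee.

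Next, apply AM-GM exactly as in \cref{lemma:term-i-bound}, writing $N_k(c,a)=1+\sum_{i<k}\pi_i(c,a)$. For $k\ge 2$ this gives
\[
\abs{\hat f_k-f_\star}\le \frac{\beta/2}{N_k(c,a)}+\frac{1}{2\beta}N_k(c,a)(\hat f_k-f_\star)^2 .
\]
When $b^\beta_k=\frac{\beta/2}{N_k}$, the first piece cancels against the bonus. When $b^\beta_k=1$, the difference is $\le 0$, since $\abs{\hat f_k-f_\star}\le1$. In both cases
\[
\max\{0,\abs{\hat f_k-f_\star}-b^\beta_k\}\le \frac{1}{2\beta}N_k(c,a)(\hat f_k(c,a)-f_\star(c,a))^2 .
\]
Round $k=1$ contributes at most $1$. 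Summing with weights $\pi_\star(c,a)\le 1$, the term is bounded by
\[
1+\frac{1}{2\beta}\sum_{k=2}^K\E_{c_k}\Big[\sum_a\Big(1+\sum_{i<k}\pi_i(c_k,a)\Big)(\hat f_k(c_k,a)-f_\star(c_k,a))^2\Big].
\]
The constant $1$ inside $N_k$ contributes at most $\frac{1}{2\beta}K\abs{\A}$, because the squared error is at most $1$ for each of the $\abs{\A}$ arms.

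The remaining part is
\[
\frac{1}{2\beta}\sum_{k\ge2}\sum_{i<k}\E_{c}\big[\E_{a\sim\pi_i(c,\cdot)}(\hat f_k(c,a)-f_\star(c,a))^2\big].
\]
Because the contexts are i.i.d., we can replace $c_k$ by $c_i$ inside the expectation, as done in \cref{lemma:term-i-bound}. Then \cref{corl:oracle-convergence} (the same $1-\delta/4$ event) bounds the inner sum by $68\log(4\abs{\F}K^3/\delta)$ for each $k$. The total is therefore $1+\frac{K\abs{\A}}{2\beta}+\frac{34K\log(4\abs{\F}K^3/\delta)}{\beta}$. With $\beta=\sqrt{34K\log(4\abs{\F}K^3/\delta)/\abs{\A}}$, the last term is $\sqrt{34K\abs{\A}\log(4\abs{\F}K^3/\delta)}$. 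The middle term is $\sqrt{K\abs{\A}^3/(136\log(\cdot))}$, which carries an extra factor $\abs{\A}$. This is the main obstacle.

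To remove that extra $\abs{\A}$, the first thing I would try is to avoid summing the ``$1$'' over all arms. Instead, I would use $\min\{1,\cdot\}$ more carefully: for each arm, split according to whether $N_k(c,a)\ge 1$ matters. A cleaner route is to bound $\frac{1}{2\beta}\sum_a(\hat f_k-f_\star)^2$ by folding it into the oracle term. Since $\pi_1$ is uniform, $\sum_a(\hat f_k-f_\star)^2=\abs{\A}\,\E_{a\sim\pi_1}(\hat f_k-f_\star)^2$, and the $i=1$ summand already appears in the oracle sum. Concretely, I would write $N_k(c,a)\le \abs{\A}\pi_1(c,a)+\sum_{i<k}\pi_i(c,a)\le(\abs{\A}+1)\sum_{i<k}\pi_i(c,a)$ for $k\ge2$. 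After applying \cref{corl:oracle-convergence}, this gives a bound of order $\frac{\abs{\A}K\log(\abs{\F}K/\delta)}{\beta}$, which is again $\abs{\A}$ too large with this $\beta$.

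If that slack cannot be closed, I would accept the bound up to the constants hidden in the $\widetilde O$ by noting that the claimed rate would require re-tuning $\beta$ proportionally to $\sqrt{K\log(\abs{\F}/\delta)}$. Term (\ref{reg:term-i}) then costs $\beta\abs{\A}$. So the honest check, and the step I expect to be delicate, is exactly the bookkeeping of the $\abs{\A}$ factor in this change of measure from $\pi_\star$ to the played policies.
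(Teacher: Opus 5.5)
There is a genuine gap. As written, your argument only yields $1+\frac{K\abs{\A}}{2\beta}+\frac{34K\log(4\abs{\F}K^3/\delta)}{\beta}$. With the prescribed $\beta$, the middle term is of order $\sqrt{K\abs{\A}^3}$, and you explicitly leave this unresolved. Neither fallback works:
\begin{itemize}
\item Rewriting $N_k=\abs{\A}\pi_1+\sum_{i<k}\pi_i$ costs the same factor $\abs{\A}$.
\item Re-tuning $\beta$ is not allowed, since the statement fixes $\beta$. It would also inflate the bonus part of term (\ref{reg:term-i}), which is $2\beta\abs{\A}\log(K+1)$, to order $\abs{\A}\sqrt{K\log(\abs{\F}/\delta)}$.
\item The factor $\abs{\A}$ is not a constant that $\widetilde O$ may absorb.
\end{itemize}

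The obstacle is self-inflicted. It comes from replacing $\pi_\star(c_k,a)$ by $1$ before splitting $N_k(c_k,a)=1+\sum_{i<k}\pi_i(c_k,a)$; split first instead.
\begin{itemize}
\item For the constant part, keep the weight: $\sum_a\pi_\star(c_k,a)\bigl(\hat f_k(c_k,a)-f_\star(c_k,a)\bigr)^2\le\sum_a\pi_\star(c_k,a)=1$. This contributes only $\frac{K}{2\beta}$.
\item Use $\pi_\star\le 1$ only on the $\sum_{i<k}\pi_i$ part. There it turns $\sum_a$ into $\sum_{i<k}\E_{a\sim\pi_i}$, exactly the form needed for \cref{corl:oracle-convergence}.
\end{itemize}
This gives $1+\frac{K}{2\beta}+\frac{34K\log(4\abs{\F}K^3/\delta)}{\beta}$. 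With the given $\beta$, $\frac{K}{2\beta}=\frac12\sqrt{K\abs{\A}/(34\log(4\abs{\F}K^3/\delta))}$, so the total is $\widetilde O(\sqrt{K\abs{\A}\log(\abs{\F}/\delta)})$.

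The rest of your proposal coincides with the paper's proof, which is exactly this argument with the weight retained. That includes the reduction to $\max\{0,\abs{\hat f_k-f_\star}-b^\beta_k\}$, the AM-GM case analysis on the two branches of the bonus, the i.i.d.\ swap of $c_k$ for $c_i$, and the use of the oracle event.
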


\begin{proof}
    The following holds with probability at least $1-\delta/4$.
    \begingroup\allowdisplaybreaks
    \begin{align*}
        &\sum_{k=1}^K \E_{c_k}\brk[s]*{ \brk[a]*{\pi_\star(c_k,\cdot), \hat{\ell}_k(c_k,\cdot)-\ell(c_k,\cdot)}}
        \\
        \leq &
        \sum_{k=2}^K \E_{c_k}\brk[s]*{ \brk[a]*{\pi_\star(c_k,\cdot), \hat{\ell}_k(c_k,\cdot)-\ell(c_k,\cdot)}} + 1
        \\
        = &
        \sum_{k=2}^K \E_{c_k}\brk[s]*{ 
        \sum_{a \in A} \pi_\star(c_k,a) 
        \brk*{\max\{0,\hat{f}_k(c_k,a)-b^{\beta}_k(c_k,a)\}
        -\ell(c_k,a)}} +1
        \\
        =&
        \sum_{k=2}^K \E_{c_k}\brk[s]*{ 
        \sum_{a \in A} \pi_\star(c_k,a) \max\{0-\ell(c_k,a),\hat{f}_k(c_k,a)-\ell(c_k,a)-b^{\beta}_k(c_k,a)\}}
        + 1
        \\
        =&
        \sum_{k=2}^K \E_{c_k}\brk[s]*{ 
        \sum_{a \in A} \pi_\star(c_k,a) \max\brk[c]*{\underbrace{-f_\star(c_k,a)}_{=:\alpha_{k,a}},\underbrace{\hat{f}_k(c_k,a)-f_\star(c_k,a)-b^{\beta}_k(c_k,a)}_{=:\rho_{k,a}}}}
        +
        1
        = (\star)
        .
    \end{align*}
    We now note that for all $k\geq 2, a \in \A$ it holds that $\alpha_{k,a} \leq 0$, as $f_\star \in [0,1]$. We next upper bound $\rho_{k,a}$ and then use the following fact to obtain the final upper bound. 
    \begin{fact}\label{fact:max}
        For any $x,y,z \in \R$ such that $z \geq y$ it holds that $\max\{x,y\} \leq \max\{x,z\}$.
    \end{fact}
    \noindent
    Hence, we continue by upper bounding $\rho_{k,a}$, for any fixed $k\geq 2 \text{ and } a \in \A$.
    \begin{align*}
        \rho_{k,a}
        =&
        \brk*{\hat{f}_k(c_k,a)  -f_\star(c_k,a)} - b^{\beta}_k(c_k,a)
        \\
        \leq&
        \abs{\hat{f}_k(c_k,a)  -f_\star(c_k,a)}-b^{\beta}_k(c_k,a)
        \\
        \tag{Since the functions are bounded in $[0,1]$ so is the absolute value}
        =&
        \min\brk[c]*{1,\abs{\hat{f}_k(c_k,a)  -f_\star(c_k,a)}} -b^{\beta}_k(c_k,a)
        \\
        =&
        \min\brk[c]*{1,
        \sqrt{\frac{\beta}{\beta}\frac{1+\sum_{i=1}^{k-1}\pi_i(c_k,a)}{1+\sum_{i=1}^{k-1}\pi_i(c_k,a)}}
        \abs{\hat{f}_k(c_k,a)  -f_\star(c_k,a)}}
       -b^{\beta}_k(c_k,a)
       \\
       \tag{By AM-GM inequality}
       \leq&
        \min\brk[c]*{1,
        \frac{\beta/2}{1+\sum_{i=1}^{k-1}\pi_i(c_k,a)}
        +\frac{1}{2\beta}\brk*{1+\sum_{i=1}^{k-1}\pi_i(c_k,a)}
        \brk*{\hat{f}_k(c_k,a)  -f_\star(c_k,a)}^2}
       -b^{\beta}_k(c_k,a)
       \\
        \tag{Since all terms in the $\min$ are positive for $\beta > 0$, holds by $\min\{\cdot,\cdot\}$ properties}
       \leq&
        \underbrace{
        \min\brk[c]*{1,\frac{\beta/2}{1+\sum_{i=1}^{k-1}\pi_i(c_k,a)}}}_{=b^{\beta}_k(c_k,a)}
        +\frac{1}{2\beta}\brk*{1+\sum_{i=1}^{k-1}\pi_i(c_k,a)}
        \brk*{\hat{f}_k(c_k,a)  -f_\star(c_k,a)}^2
       -b^{\beta}_k(c_k,a)
       \\
        =&
        \frac{1}{2\beta}\brk*{1+\sum_{i=1}^{k-1}\pi_i(c_k,a)}
        \brk*{\hat{f}_k(c_k,a)  -f_\star(c_k,a)}^2 
        .
    \end{align*}
    \endgroup
Hence, we choose $\xi_{k,a} := \frac{1}{2\beta}\brk*{1+\sum_{i=1}^{k-1}\pi_i(c_k,a)}\brk*{\hat{f}_k(c_k,a)  -f_\star(c_k,a)}^2 $, and note that for $\beta > 0$ we have $\xi_{k,a} \geq 0$. We then apply \cref{fact:max} to upper bound $(\star)$. We obtain
        \begingroup\allowdisplaybreaks
        \begin{align*}
         (\star) 
         =& 
        \sum_{k=2}^K \E_{c_k}\brk[s]*{ 
        \sum_{a \in A} \pi_\star(c_k,a) \max\brk[c]*{\underbrace{-f_\star(c_k,a)}_{\alpha_{k,a}},\underbrace{\hat{f}_k(c_k,a)-f_\star(c_k,a)-b^{\beta}_k(c_k,a)}_{\rho_{k,a}}}}
        +1
        \\
        \leq&
        \tag{Since $\xi_{k,a} \geq \rho_{k,a}$}
        \sum_{k=2}^K \E_{c_k}\brk[s]*{ 
        \sum_{a \in A} \pi_\star(c_k,a) \max\brk[c]*{\underbrace{-f_\star(c_k,a)}_{\leq 0},\underbrace{\frac{1}{2\beta}\brk*{1+\sum_{i=1}^{k-1}\pi_i(c_k,a)}
        \brk*{\hat{f}_k(c_k,a)  -f_\star(c_k,a)}^2 }_{\xi_{k,a} \geq 0}}}
        +1
        \\
        =&
        \sum_{k=2}^K \E_{c_k}\brk[s]*{ 
        \sum_{a \in A} \pi_\star(c_k,a)
        \frac{1}{2\beta}\brk*{1+\sum_{i=1}^{k-1}\pi_i(c_k,a)}
        \brk*{\hat{f}_k(c_k,a)  -f_\star(c_k,a)}^2}
        +1
        \\
        \leq&
        \frac{K}{2\beta}
        +
        \frac{1}{2\beta}\sum_{k=2}^K \E_{c_k}\brk[s]*{ 
        \sum_{a \in A} \underbrace{\pi_\star(c_k,a)}_{\leq 1}
        \sum_{i=1}^{k-1}\pi_i(c_k,a)
        \brk*{\hat{f}_k(c_k,a)  -f_\star(c_k,a)}^2}
        +1
        \\
        \leq&
        \frac{K}{2\beta}
        +
         \frac{1}{2\beta}\sum_{k=2}^K \E_{c_k}\brk[s]*{ 
        \sum_{i=1}^{k-1}\sum_{a \in A}\pi_i(c_k,a)
        \brk*{\hat{f}_k(c_k,a)  -f_\star(c_k,a)}^2}
        +1
        \\
        =&
        \tag{By linearity of expectation, as the contexts are iid}
        \frac{K}{2\beta}
        +
         \frac{1}{2\beta}\sum_{k=2}^K \sum_{i=1}^{k-1}\E_{c_i}\brk[s]*{ 
         \E_{a \sim \pi_i(c_i,\cdot)}
        \brk*{\hat{f}_k(c_i,a)  -f_\star(c_i,a)}^2}
        +1
        \\
        \leq&
        \tag{Holds with probability at least $1-\delta/4$ by \cref{corl:oracle-convergence} }
        \frac{K}{2\beta} 
        + \frac{68\log(4\abs{\F}K^3/\delta)K}{2\beta}
        +1
        .
        \end{align*}
        Finally, by setting $\beta = \sqrt{\frac{34 K \log(4\abs{\F}K^3/\delta) }{\abs{\A}}}$ we obtain that term (\ref{reg:term-iii}) is bounded as $\widetilde{O}\brk*{\sqrt{K\abs{\A}\log\brk*{\abs{\F}/\delta}}}$.
        \endgroup
\end{proof}

\section{Auxiliary Lemmas}

\subsection{Online Mirror Descent}\label{subsec:omd}

In the Online Mirror Descent (\texttt{OMD}) algorithm \cite{orabona2019modern}, at each round the agent updates its decision by solving the regularized constrained optimization problem given in \cref{eq:omd}.
    \begin{equation}\label{eq:omd}
        x_{k+1}
        \in 
        \argmin_{x \in \Delta(d)} \eta \brk[a]*{g_k,x-x_k} 
        +B_{\psi}(x,x_k),
    \end{equation}
where \(x_{k+1}\) is the next decision point computed relative to the previous point \(x_k\), and \(B_{\psi}\) denotes the Bregman divergence with respect to a (strictly convex) function \(\psi\).
The vector \(g_k\) is a (possibly estimated) subgradient of the loss function at time \(k\).

In bandit algorithms such as \texttt{Hedge} and \texttt{EXP3}, this decision point corresponds to a probability distribution over arms from which the learner samples an action. Since the expected loss is a linear function of the selected action distribution, \(g_k\) is the estimated loss vector.

In our case, in addition to the above, $\psi$ is the negative entropy so that the associated Bregman divergence term $B_{\psi}$ is the KL divergence. Consequently, the \texttt{OMD} update reduces to the optimization problem given in \cref{eq:omd-kl}. 

The following lemma states the fundamental inequality of \texttt{OMD} over the simplex with the KL divergence  which will be used for our analysis.
\begin{theorem}[Lemma 16 in \cite{shani2020optimistic}, originally Theorem 10.4 in V1 of \cite{orabona2019modern}, Fundamental inequality of Online Mirror Descent]\label{thm:omd-orbuna}
   Assume for $g_{k,i} \geq 0$ for $k=1,\ldots, K$ and $i=1,\ldots,d$. Let $C=\Delta_d$ and $\eta > 0$. Using OMD with the KL-divergence, learning rate $\eta$, and with uniform initialization $x_1 = (1/d,\ldots,1/d)$, the following holds for any $u \in \Delta_d$,
   \begin{align*}
       \sum_{k=1}^K \brk[a]*{g_k, x_k -u} \leq \frac{\log d}{\eta} + \frac{\eta}{2}\sum_{k=1}^K\sum_{i=1}^d x_{k,i}g^2_{k,i}
       .
   \end{align*}
\end{theorem}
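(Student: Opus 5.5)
This is the standard fundamental inequality for entropic mirror descent (exponential weights) with nonnegative losses. I would prove it per round and then sum. The OMD update with KL divergence over the simplex has the closed form $x_{k+1,i} = x_{k,i} e^{-\eta g_{k,i}} / Z_k$, where $Z_k = \sum_j x_{k,j} e^{-\eta g_{k,j}}$. This is exactly the multiplicative update of \cref{alg:C-PPO-MAB}. It follows from the first-order optimality conditions of \cref{eq:omd} with the negative-entropy regularizer.

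The argument is a potential-based telescoping with $\Phi_k := d_{KL}(u\|x_k)$. First, for a fixed comparator $u \in \Delta_d$, I would compute directly
\begin{align*}
d_{KL}(u\|x_{k+1}) - d_{KL}(u\|x_k) = \sum_i u_i \log\frac{x_{k,i}}{x_{k+1,i}} = \eta\langle g_k, u\rangle + \log Z_k .
\end{align*}
Second, I would bound $\log Z_k$ using nonnegativity of the losses, $g_{k,i}\ge 0$. This gives $e^{-y}\le 1-y+y^2/2$ for $y\ge0$, hence $Z_k \le 1 - \eta\langle g_k,x_k\rangle + \frac{\eta^2}{2}\sum_i x_{k,i}g_{k,i}^2$. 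Applying $\log(1+z)\le z$ then yields
\begin{align*}
\log Z_k \le -\eta\langle g_k,x_k\rangle + \frac{\eta^2}{2}\sum_i x_{k,i} g_{k,i}^2 .
\end{align*}
Combining the two displays and rearranging gives the per-round inequality
\begin{align*}
\eta\langle g_k, x_k-u\rangle \le \Phi_k-\Phi_{k+1} + \frac{\eta^2}{2}\sum_i x_{k,i}g_{k,i}^2 .
\end{align*}

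Third, I would sum over $k=1,\dots,K$. The potential terms telescope to $\Phi_1 - \Phi_{K+1} \le \Phi_1$, since KL divergence is nonnegative. With uniform initialization, $\Phi_1 = \sum_i u_i\log(d\,u_i) = \log d - H(u) \le \log d$. Dividing by $\eta$ gives the claim.

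The only delicate point is the second step. The quadratic bound on $e^{-y}$ is valid only for $y\ge 0$, which is exactly where the hypothesis $g_{k,i}\ge0$ is used; with signed losses one would need $\eta|g|\le1$ instead. Everything else is routine algebra. One should also check that the argmin in \cref{eq:omd} is attained in the interior of the simplex, so the closed form is justified; this holds because the entropy's gradient blows up at the boundary.
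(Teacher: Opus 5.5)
Your argument is correct and complete. The paper gives no proof of this statement: it imports it from Shani et al.\ (Lemma 16) and Orabona (Theorem 10.4) and only applies it context by context in \cref{lemma:term-ii}, so there is no in-paper argument to match.

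What you wrote is the classical direct Hedge analysis. The closed-form multiplicative update gives the exact identity $d_{KL}(u\|x_{k+1})-d_{KL}(u\|x_k)=\eta\langle g_k,u\rangle+\log Z_k$. Then $e^{-y}\le 1-y+y^2/2$ for $y\ge 0$ controls $\log Z_k$. The step $\log(1+z)\le z$ is legitimate because $1+z\ge Z_k>0$.

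The general OMD route, which is what the cited sources' framing suggests, starts from the generic one-step inequality $\eta\langle g_k,x_k-u\rangle\le B_\psi(u,x_k)-B_\psi(u,x_{k+1})+\eta\langle g_k,x_k-x_{k+1}\rangle-B_\psi(x_{k+1},x_k)$. It then bounds the stability term (the last two summands) separately, by a local-norm argument.

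For the entropic mirror map on the simplex the two routes are the same computation. A direct check gives $d_{KL}(x_{k+1}\|x_k)=-\eta\langle g_k,x_{k+1}\rangle-\log Z_k$. Hence the stability term equals exactly $\eta\langle g_k,x_k\rangle+\log Z_k$, the quantity you bound, and the one-step inequality holds with equality.

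Your version is shorter and fully elementary, but it leans on the closed-form update. The generic decomposition is what carries over to other mirror maps (e.g., Tsallis entropy) where no closed form is available.
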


\subsection{Concentration Inequalities}


\begin{theorem}[Azuma-Hoeffding's inequality]\label{thm:azuma}
    Let $\brk*{X_i}_{i=1}^N$ be a martingale difference sequence with respect to the filtration
    $\brk*{\F_i}_{i=0}^N$ such that $\abs{X_i}\leq B$ almost surely for all $i \in [N]$. Then with probability at least $1-\delta$,
    \begin{align*}
        \abs*{\sum_{i=1}^N X_i} \leq B\sqrt{2N\log\frac{2}{\delta}}
        .
    \end{align*}
\end{theorem}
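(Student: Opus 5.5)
The statement just above is the Azuma--Hoeffding inequality (\cref{thm:azuma}). I would prove it by the exponential-moment (Chernoff) method applied to the martingale. The plan is to bound the moment generating function of the partial sums step by step, using the conditional Hoeffding lemma at each step. This gives a sub-Gaussian tail for $S_N=\sum_{i=1}^N X_i$, and a two-sided union bound finishes the argument.

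\textbf{Step 1 (conditional Hoeffding lemma).} For each $i$, $X_i$ is $\F_i$-measurable with $\E[X_i\mid\F_{i-1}]=0$ and $X_i\in[-B,B]$ almost surely. I will show that for every $\lambda\in\R$,
\[
\E\brk[s]*{e^{\lambda X_i}\mid \F_{i-1}}\le e^{\lambda^2 B^2/2}.
\]
Two routes work. The first is the standard Hoeffding lemma on an interval of length $2B$, which gives $\exp(\lambda^2(2B)^2/8)=\exp(\lambda^2B^2/2)$. The second is direct: by convexity, $e^{\lambda x}\le \frac{B-x}{2B}e^{-\lambda B}+\frac{B+x}{2B}e^{\lambda B}$ for $x\in[-B,B]$. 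Taking conditional expectations and using the zero mean gives $\cosh(\lambda B)\le e^{\lambda^2B^2/2}$, where the last inequality follows by comparing Taylor series term by term.

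\textbf{Step 2 (telescoping the MGF).} Let $S_n=\sum_{i\le n}X_i$. Since $S_{n-1}$ is $\F_{n-1}$-measurable, the tower property gives
\[
\E[e^{\lambda S_n}]=\E\brk[s]*{e^{\lambda S_{n-1}}\E[e^{\lambda X_n}\mid\F_{n-1}]}\le e^{\lambda^2B^2/2}\,\E[e^{\lambda S_{n-1}}].
\]
By induction, $\E[e^{\lambda S_N}]\le e^{N\lambda^2B^2/2}$.

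\textbf{Step 3 (tail bound).} Markov's inequality gives $\Pr(S_N\ge t)\le \exp(-\lambda t+N\lambda^2B^2/2)$. Choosing $\lambda=t/(NB^2)$ yields $\Pr(S_N\ge t)\le e^{-t^2/(2NB^2)}$. Applying the same argument to $-X_i$, which is also a bounded martingale difference sequence, and taking a union bound gives $\Pr(\abs{S_N}\ge t)\le 2e^{-t^2/(2NB^2)}$. Setting the right-hand side equal to $\delta$ gives $t=B\sqrt{2N\log(2/\delta)}$, which is exactly the claim.

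The only step needing care is Step 1. Its conditional version must hold almost surely, so the convexity argument has to be applied pointwise and then integrated against the regular conditional law, or equivalently carried out with conditional expectations directly. It also matters that the constant come out as $B^2/2$ rather than $2B^2$: the naive bound via $e^{x}\le 1+x+x^2$ loses constants, which is why I would use the $\cosh$ comparison instead.
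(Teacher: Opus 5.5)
Your proof is correct: the pointwise convexity bound gives $\E[e^{\lambda X_i}\mid\F_{i-1}]\le\cosh(\lambda B)\le e^{\lambda^2B^2/2}$, the tower-property induction then gives $\E[e^{\lambda S_N}]\le e^{N\lambda^2B^2/2}$, and the Chernoff choice $\lambda=t/(NB^2)$ with a two-sided union bound yields exactly $B\sqrt{2N\log(2/\delta)}$. The paper gives no proof of this statement and simply cites Azuma--Hoeffding as a standard tool, so your argument is the standard textbook proof that stands behind that citation.
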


\begin{corollary}[High probability regret]\label{corl:high-prob-regret}
    With probability at least $1-\delta/2$, it holds that
    \begin{align*}
        \regret_{K} \leq \E\regret_{K} + 2\sqrt{2K\log(4/\delta)}.
    \end{align*}
\end{corollary}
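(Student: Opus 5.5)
The approach is to write $\regret_K - \E\regret_K$ as a sum of bounded martingale differences and apply \cref{thm:azuma} with failure probability $\delta/2$. For each round $k$ set
\begin{align*}
X_k := \brk[a]*{\pi_k(c_k,\cdot)-\pi_\star(c_k,\cdot),\ell(c_k,\cdot)} - \E_{c_k}\brk[s]*{\brk[a]*{\pi_k(c_k,\cdot)-\pi_\star(c_k,\cdot),\ell(c_k,\cdot)}},
\end{align*}
where the inner expectation is over a fresh $c_k\sim\D$ with the policy $\pi_k$ held fixed. This is precisely the quantity summed in the definition of $\E\regret_K$. Then $\sum_{k=1}^K X_k = \regret_K - \E\regret_K$ by construction.

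Next, fix the filtration $\F_{k-1}$ generated by $c_1,a_1,\ell_1,\ldots,c_{k-1},a_{k-1},\ell_{k-1}$. The key observation is that the whole contextual policy $\pi_k(c,\cdot)$ is determined for every $c$ by $\F_{k-1}$. Indeed, in \cref{alg:C-PPO-MAB} it is obtained by rerunning the exponential-weights recursion on the oracle outputs $\hat f_1,\ldots,\hat f_k$, and each of these depends only on data from rounds $1,\ldots,k-1$. I would also want the oracle's tie-breaking to be deterministic, or treated as part of the history, so that $\pi_k$ really is $\F_{k-1}$-measurable. Because $c_k$ is drawn from $\D$ independently of $\F_{k-1}$, the conditional expectation $\E[\brk[a]*{\pi_k(c_k,\cdot)-\pi_\star(c_k,\cdot),\ell(c_k,\cdot)}\mid\F_{k-1}]$ equals the subtracted term. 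Hence $\E[X_k\mid\F_{k-1}]=0$ and $X_k$ is $\F_k$-measurable, so $(X_k)$ is a martingale difference sequence.

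For boundedness, the inner product $\brk[a]*{\pi_k(c,\cdot)-\pi_\star(c,\cdot),\ell(c,\cdot)}$ is the difference of two averages of values in $[0,1]$, so it lies in $[-1,1]$. Its expectation lies in the same interval, so $\abs{X_k}\le 2$. Applying \cref{thm:azuma} with $B=2$, $N=K$ and confidence $\delta/2$ gives, with probability at least $1-\delta/2$, $\abs{\sum_k X_k}\le 2\sqrt{2K\log(4/\delta)}$, which yields the stated one-sided bound.

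The main point needing care is interpretive rather than technical. $\E\regret_K$ must be read as the sum of conditional expectations given the past, since the policies $\pi_k$ are random, and we must verify that $\pi_k$ is measurable with respect to the past only and does not depend on $c_k$. The counterfactual recomputation of past policies at the current context might look like a dependence on $c_k$, but it only evaluates a fixed, history-determined map at $c_k$, so measurability holds. Everything else is a direct plug-in.
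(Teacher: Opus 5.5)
Your proposal is correct and matches the paper's proof essentially step for step: the same differences $X_k$ adapted to the history filtration, the bound $\abs{X_k}\le 2$, and Azuma--Hoeffding at confidence $\delta/2$, which gives $2\sqrt{2K\log(4/\delta)}$. Your remarks on why $\pi_k$ is determined by the past (despite the counterfactual recomputation at $c_k$ and the oracle's tie-breaking) spell out what the paper asserts more briefly as the policies being ``determined completely by the history.''
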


\begin{proof}
    The proof follows directly from \cref{thm:azuma}, as the contexts are stochastic and sampled iid in each round, and the policies $\{\pi_k\}_{k=2}^K$ are determined completely by the history, where $\pi_1$ is set to be the random policy. Thus, we have that
    $$
        X_k := \brk[a]*{\pi_k(c_k,\cdot) - \pi_\star(c_k,\cdot), \ell(c_k,\cdot)} - \E_{c_k}\brk[s]*{\brk[a]*{\pi_k(c_k,\cdot) - \pi_\star(c_k,\cdot) , \ell(c_k,\cdot)}}
    $$
    defines a martingale difference sequence $\brk*{X_k}_{k=1}^K$ with respect to the filtration $H_k= \{(c_i,a_i,\ell_i)\}_{i=1}^{k}$ which is the history up to (including) time $k$, for all $k \in [1,K]$ and $H_0 = \emptyset$ is the empty history.
    This holds true since $X_k$ is determined by $H_k$ for all $k$, and  
    \begin{align*}
        \E\brk[s]*{X_k | H_{k-1}}
        = &
        \E\brk[s]*{\brk[a]*{\pi_k(c_k,\cdot) - \pi_\star(c_k,\cdot), \ell(c_k,\cdot)} - \E_{c_k}\brk[s]*{\brk[a]*{\pi_k(c_k,\cdot) - \pi_\star(c_k,\cdot) , \ell(c_k,\cdot)}} | H_{k-1}}
        \\
        = &
        \E_{c_k}\brk[s]*{\brk[a]*{\pi_k(c_k,\cdot) - \pi_\star(c_k,\cdot), \ell(c_k,\cdot)}| H_{k-1}}
        - 
        \E_{c_k}\brk[s]*{\brk[a]*{\pi_k(c_k,\cdot) - \pi_\star(c_k,\cdot) , \ell(c_k,\cdot)} | H_{k-1}}
        \\
        = &
        0
        .
    \end{align*}
    In addition, $\abs{X_k} \leq 2$ for all $k$.
    Hence, we can apply \cref{thm:azuma} and obtain for any fixed $K \in \N$, when summing over $k=1,2,\ldots, K$, that the following holds with probability at least $1-\delta/2$,
    \begin{align*}
         \abs{\sum_{k=1}^K X_k} \leq 2\sqrt{2K\log(4/\delta)}.
    \end{align*}
    In addition,
    \begingroup\allowdisplaybreaks
    \begin{align*}
        \abs{\sum_{k=1}^K X_k}
        =&
        \abs{\sum_{k=1}^K \brk*{ \brk[a]*{\pi_k(c_k,\cdot) - \pi_\star(c_k,\cdot), \ell(c_k,\cdot)} - \E_{c_k}\brk[s]*{\brk[a]*{\pi_k(c_k,\cdot) - \pi_\star(c_k,\cdot) , \ell(c_k,\cdot)}}}}
        \\
        =&
        \abs{\sum_{k=1}^K \brk[a]*{\pi_k(c_k,\cdot) - \pi_\star(c_k,\cdot), \ell(c_k,\cdot)} 
        -\sum_{k=1}^K \E_{c_k}\brk[s]*{\brk[a]*{\pi_k(c_k,\cdot) - \pi_\star(c_k,\cdot) , \ell(c_k,\cdot)}}}
        \\
        =&
        \abs{\regret_{K} -\E\regret_{K}}
    \end{align*}
    \endgroup
    Hence, with probability at least $1-\delta/2$,
    \begin{align*}
        \abs{\regret_{K} -\E\regret_{K}} \leq 2\sqrt{2K\log(4/\delta)},
    \end{align*}
    which implies the corollary.
\end{proof}

\subsection{Oracle Convergence}\label{subsec:oracle-convergence}

Lemma 5 in \citet{xu2020upper} presents a uniform convergence guarantee for offline lease-square regression.
\begin{lemma}[uniform convergence over all sequences of estimators, Lemma 5 in \cite{xu2020upper}]\label{lemma:xu-oracle-convergence}
     For an arbitrary contextual bandit algorithm, for all $\delta \in (0,1)$,  with probability at least $1-\delta/2$, 
    \begin{align*}
        &\sum_{i=1}^{t-1} 
        \E_{c_i,a_i}
        \brk[s]*{
            \brk*{ f_t(c_i,a_i) - f_\star(c_i,a_i)}^2
        \mid H_{i-1}}
        \\
        &\leq
        68\log(2\abs{\F}t^3/\delta)
        +
        2\sum_{i=1}^{t-1}
        \brk*{ f_t(c_i,a_i) - \ell_i}^2
        -
        \brk*{ f_\star(c_i,a_i) - \ell_i}^2
        ,
    \end{align*}
    uniformly over all $t \geq 2$ and all fixed sequence $f_2, f_3, \ldots \in \F$.
\end{lemma}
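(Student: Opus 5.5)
The statement is a uniform-in-$t$, uniform-in-$f$ Bernstein-type (Freedman) concentration bound for squared-loss excess risk. The key observation is that the right-hand side at time $t$ depends on the sequence only through $f_t$. So it suffices to prove, for each fixed $t\ge 2$ and each fixed $f\in\F$, the inequality with $f$ in place of $f_t$, and then take a union bound over $f\in\F$ and $t\ge 2$. A bound that holds simultaneously for all $f\in\F$ at every $t$ automatically covers every sequence $f_2,f_3,\ldots$, even data-dependent ones such as the least-squares minimizers.

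\textbf{Step 1: martingale structure.} For fixed $f$, define
\[
Z_i(f) := \brk*{f(c_i,a_i)-\ell_i}^2-\brk*{f_\star(c_i,a_i)-\ell_i}^2 = \brk*{f-f_\star}(c_i,a_i)\,\brk*{f+f_\star-2\ell_i}(c_i,a_i).
\]
Conditionally on $H_{i-1}$, the context $c_i$ is fresh from $\D$ and $a_i\sim\pi_i(c_i,\cdot)$ with $\pi_i$ determined by $H_{i-1}$. By realizability, $\E[\ell_i\mid c_i,a_i]=f_\star(c_i,a_i)$. Hence
\[
\E[Z_i(f)\mid H_{i-1}]=\E_{c_i,a_i}\brk[s]*{(f-f_\star)^2(c_i,a_i)\mid H_{i-1}}=:m_i(f).
\]
Since all values lie in $[0,1]$, we have $\abs{f+f_\star-2\ell_i}\le 2$. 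This gives $\abs{Z_i}\le 1$ and the variance bound $\E[Z_i^2\mid H_{i-1}]\le 4m_i(f)$. So $Y_i:=m_i(f)-Z_i(f)$ is a martingale difference sequence with $\abs{Y_i}\le 2$ and conditional variance at most $4m_i(f)$.

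\textbf{Step 2: Freedman with a fixed parameter.} I will use the Freedman inequality in the form of Beygelzimer et al.: for $\lambda\in(0,1/2]$, with probability at least $1-\delta'$,
\[
\sum_i Y_i\le \lambda\sum_i \operatorname{Var}_i + \frac{\log(1/\delta')}{\lambda}.
\]
Take $\lambda=1/8$. This yields $\sum_{i<t} m_i(f)-\sum_{i<t}Z_i(f)\le \tfrac12\sum_{i<t}m_i(f)+8\log(1/\delta')$. Rearranging,
\[
\sum_{i<t}m_i(f)\le 2\sum_{i<t}Z_i(f)+16\log(1/\delta').
\]
This is exactly the claimed shape, with the constant $16$ comfortably below $68$.

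\textbf{Step 3: union bound.} Set $\delta'=\delta/(2\abs{\F}t^3)$ or similar, so that $\sum_{t\ge2}\abs{\F}\delta'\le\delta/2$ (using $\sum_t t^{-2}<2$ and absorbing the slack into the $t^3$). Then, with probability at least $1-\delta/2$, the bound holds for all $t\ge 2$ and all $f\in\F$ simultaneously, with logarithm $\log(2\abs{\F}t^3/\delta)$. Finally, substitute $f=f_t$.

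The main obstacle is only the variance step. A plain Azuma–Hoeffding bound would give $\sqrt{t}$ rather than a logarithmic rate, so we must exploit the self-bounding property $\operatorname{Var}\le 4\times$mean, which holds precisely because $\E[\ell_i\mid c_i,a_i]=f_\star(c_i,a_i)$ under realizability. A minor check is that the chosen $\lambda$ satisfies the range condition $\lambda\le 1/\max_i\abs{Y_i}$.
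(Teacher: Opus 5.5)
Your proposal is correct and follows the standard argument behind this lemma, which the paper imports from Xu and Zeevi without proof. Realizability gives the self-bounding property $\mathrm{Var}(Z_i(f)\mid H_{i-1})\le 4m_i(f)$, a Freedman-type inequality turns this into $\sum_{i<t}m_i(f)\le 2\sum_{i<t}Z_i(f)+O(\log(1/\delta'))$, and the union bound over $f\in\mathcal{F}$ and $t\ge 2$ makes the bound hold simultaneously for every $f$, hence also for data-dependent $f_t$ such as the least-squares minimizers. Your fixed-$\lambda$ form of Freedman's inequality gives the constant $16$, so the stated bound with $68\log(2|\mathcal{F}|t^3/\delta)$ follows a fortiori.
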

In this lemma, the filtration defined by $H_{i}:= \{(c_k,a_k,\ell_k)\}_{k=1}^i$, which is the history of observations up to time $i$. 

The following is a direct corollary of \cref{lemma:xu-oracle-convergence}, which applies to the sequence of least-squares minimizers.

\begin{corollary}[uniform convergence of offline least-squares regression, restatement of \cref{corl:oracle-convergence}]\label{corl:oracle-convergence-appendix}
Let $f_2, f_3,\ldots \in \F$ denote the sequence of least squares minimizers and let $\pi_1, \pi_2,\ldots$ denote the sequence of played contextual policies. The following holds for any $\delta \in (0,1)$ and $t \geq 2$ with probability at least $1-\delta/4$.
\begin{align*}
     \sum_{i=1}^{t-1} 
    \E_{c_i}
    \brk[s]*{
    \E_{a_i \sim \pi_i(c_i,\cdot)}
    \brk[s]*{
    \brk*{ f_t(c_i,a_i) - f_\star(c_i,a_i)}^2}}
    \leq
    68\log(4\abs{\F}t^3/\delta)  
    .
\end{align*}
    
\end{corollary}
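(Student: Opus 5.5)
The plan is to apply \cref{lemma:xu-oracle-convergence} with confidence parameter $\delta/2$ in place of $\delta$, instantiated with the particular sequence $f_t = \hat{f}_t$ of least-squares minimizers. This gives, with probability at least $1-\delta/4$, the bound $68\log(4\abs{\F}t^3/\delta)$ plus twice the empirical excess risk
\[
\sum_{i=1}^{t-1}\brk*{ f_t(c_i,a_i) - \ell_i}^2 - \brk*{ f_\star(c_i,a_i) - \ell_i}^2 ,
\]
uniformly over all $t\ge 2$. One subtlety needs care first. The lemma is stated for fixed sequences, while $\hat{f}_t$ is data-dependent. However, the guarantee in \cref{lemma:xu-oracle-convergence} holds uniformly over all sequences in $\F$; in Xu et al.\ it comes from a union bound over $\F$ and $t$, producing the $\abs{\F}t^3$ factor. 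It can therefore be applied on the same event to whatever sequence the algorithm produces. I would state this explicitly, since it is the only non-mechanical point.

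Next I would show that the excess-risk term is nonpositive. By \cref{ln:lsr-orace} of \cref{alg:C-PPO-MAB}, $\hat{f}_t$ minimizes $\sum_{i=1}^{t-1}(f(c_i,a_i)-\ell_i)^2$ over $\F$. By \cref{assumption:realizability}, $f_\star\in\F$ is a feasible competitor. Hence the difference of the two empirical square losses is at most $0$, and the right-hand side of the lemma collapses to $68\log(4\abs{\F}t^3/\delta)$.

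Finally, I would identify the left-hand sides. The conditional expectation $\E_{c_i,a_i}[\,\cdot\mid H_{i-1}]$ is over a fresh $c_i\sim\D$, independent of $H_{i-1}$, and over $a_i\sim\pi_i(c_i,\cdot)$. Here $\pi_i$ is determined by $H_{i-1}$, since it depends only on $\hat{f}_1,\dots,\hat{f}_{i}$, which are functions of $H_{i-1}$. This conditional expectation is exactly the quantity $\E_{c_i}\brk[s]{\E_{a_i\sim\pi_i(c_i,\cdot)}[\cdot]}$ written in the corollary, with $f_t$ held fixed inside. This yields the claim. The main obstacle is the legitimacy of plugging the data-dependent $\hat f_t$ into a bound stated "for fixed sequences". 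As argued above, I would handle it through the uniformity over all sequences, which holds simultaneously on a single event.
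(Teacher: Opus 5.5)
Your proposal is correct and follows essentially the same route as the paper: apply \cref{lemma:xu-oracle-convergence} with $\delta/2$, drop the empirical excess-risk term because $\hat f_t$ is the least-squares minimizer and $f_\star\in\F$, and use that $\pi_i$ is determined by $H_{i-1}$ to identify the conditional expectation. You also state explicitly that the lemma holds uniformly over all $t\ge 2$ and all $f\in\F$ on a single event, which is what licenses plugging in the data-dependent $\hat f_t$; the paper leaves this implicit, and your treatment of it is correct.
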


\begin{proof}
    For the sequence of least squares minimizers, for all $t \geq 2$, it holds that 
    $$
        \sum_{i=1}^{t-1}\brk*{ f_t(c_i,a_i) - \ell_i}^2-\brk*{ f_\star(c_i,a_i) - \ell_i}^2 \leq 0
        .
    $$
    In addition, in each round $k$, given the history $H_{k-1}= \{(c_i,a_i,\ell_i)\}_{i=1}^{k-1}$, we have that $\pi_k$ is determined completely. 
    Hence, the corollary follows from \cref{lemma:xu-oracle-convergence} for the choice in $\delta' = \delta/2$.
\end{proof}

\subsection{Additional Algebraic Lemmas}

\begin{lemma}[Lemma C.1 in \citet{DBLP:conf/icml/LevyCCM24}]\label{lemma:log-sums}
        Let $S_t = \lambda + \sum_{k=1}^{t-1} x_k$ and $x_t \in [0,\lambda]$ for all $t$. Then 
        \begin{align*}
           \sum_{t=1}^T \frac{x_t}{S_t} \leq 2 \log(T+1) 
           .
        \end{align*}
    \end{lemma}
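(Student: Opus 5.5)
Lemma~\ref{lemma:log-sums} is an elementary potential-function inequality, and I would prove it in two steps. First, bound each summand by a logarithmic increment: show that $x_t/S_t \le 2\log(S_{t+1}/S_t)$, where $S_{t+1} = S_t + x_t$. Second, telescope the resulting sum and use the ratio $x_t/\lambda \le 1$ to reach the claimed $\log(T+1)$.

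For the per-term bound, set $u = x_t/S_t$. Since $x_t \in [0,\lambda]$ and $S_t \ge \lambda$ (all previous $x_k \ge 0$), we have $u \in [0,1]$. Then $\log(S_{t+1}/S_t) = \log(1+u)$. The scalar inequality $u \le 2\log(1+u)$ on $[0,1]$ follows from concavity of $\log(1+u)$: at $u=0$ both sides vanish, and at $u=1$ we have $2\log 2 \approx 1.386 \ge 1$. Concavity then gives $\log(1+u) \ge u\log 2 \ge u/2$ on the whole interval.

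Summing and telescoping gives
\begin{align*}
\sum_{t=1}^T \frac{x_t}{S_t} \le 2\sum_{t=1}^T \log\frac{S_{t+1}}{S_t} = 2\log\frac{S_{T+1}}{S_1}.
\end{align*}
Here $S_1 = \lambda$ and $S_{T+1} = \lambda + \sum_{k=1}^T x_k \le \lambda(T+1)$, because each $x_k \le \lambda$. Hence the ratio is at most $T+1$, and the sum is at most $2\log(T+1)$.

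The only step requiring care is guaranteeing $u \le 1$. This is exactly where the hypothesis $x_t \le \lambda$ together with the offset $\lambda$ in $S_t$ is used; without it the per-term logarithmic comparison fails. In the application of Lemma~\ref{lemma:sum-of-bonuses}, $\lambda = 1$ and $x_k = \pi_k(c,a) \in [0,1]$, so the hypotheses hold. Summing over the $\abs{\A}$ arms then gives the factor $2\abs{\A}\log(K+1)$.
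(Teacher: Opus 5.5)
Your argument is correct and is essentially the standard proof of this lemma, which the paper does not reprove but only cites from Levy et al.: bound each term by $x_t/S_t \le 2\log(S_{t+1}/S_t)$ (using $x_t \le \lambda \le S_t$), then telescope and use $S_{T+1} \le \lambda(T+1)$. The only implicit hypothesis worth stating explicitly is $\lambda > 0$, so that every $S_t \ge \lambda$ is positive and the ratios are well defined.
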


\clearpage

\section{Experiments}\label{Appendix:Expiraments}
As is standard in the CMAB literature, we evaluate the performance of our algorithm using the Vowpal Wabbit\footnote{\url{https://vowpalwabbit.org/}} benchmark suite \cite{DBLP:journals/jmlr/BiettiAL21}, which implements all known feasibly-implementable CMAB algorithms or their practical variants.

Due to the diversity of algorithm implementations in the library, some methods require a cost-sensitive classification oracle (e.g., \texttt{OnlineCover}-the heuristics of \texttt{ILTCB} \cite{pmlr-v32-agarwalb14}), while others rely on an online regression oracle (\texttt{SquareCB} \cite{foster2020beyond}, \texttt{AdaCB} \cite{DBLP:conf/colt/FosterRSX21}, \texttt{RegCB} \cite{DBLP:conf/icml/FosterADLS18} and \texttt{FastCB} \cite{foster2021efficient}). We emphasize that even algorithms originally designed for offline regression oracles can be effectively executed using an online regression oracle, which is a stronger oracle capable of handling adversarial examples \cite{DBLP:books/daglib/0016248,foster2020beyond}. In the Vowpal Wabbit implementation, all algorithms that are using a regression oracle are implemented using its online variant, which is the base learner of this library. Following this convention, we implement \texttt{\texttt{OPO-CMAB}} using the base online regression oracle provided by the library.

The state-of-the-art empirical evaluation of CMAB algorithms on the Vowpal Wabbit benchmark is presented in \citet{foster2021efficient}. They conduct extensive experiments comparing all known efficient CMAB algorithms based on regression oracles (e.g., \texttt{SquareCB}, \texttt{AdaCB}, \texttt{RegCB}) as well as a supervised learning algorithm as a mild baseline, against their proposed algorithm, \texttt{FastCB}. The evaluation spans a wide range of hyperparameter settings and considers both square loss and logistic loss for the regression oracle. Their findings indicate that \texttt{FastCB} with logistic loss achieves the best performance, followed by \texttt{SquareCB} with logistic and squared loss in second and third place, respectively. \texttt{AdaCB} and \texttt{RegCB} perform significantly worse.
Based on these results, we adopt the same experimental setup and focus our comparison on the regression-based candidates: \texttt{FastCB}, \texttt{SquareCB}, \texttt{RegCB} and \texttt{AdaCB} as well as \texttt{Supervised} which is the supervised learning baseline.
More details about the used implementations and hyperparameter are given in the sequel.

\subsection{Function classes and Oracles}
For the function class and oracle, we follow the same setup as in \citet{foster2021efficient} that is also described in detail in \citet{DBLP:journals/jmlr/BiettiAL21}. We give the details below for completeness. 

\noindent\textbf{Oracle.}
%
All evaluated algorithms, including our \texttt{\texttt{OPO-CMAB}}, are implemented in Vowpal Wabbit (VW), using its base online learning procedure, regardless of whether they are designed for online or offline regression oracles. This procedure implements Importance Weighted Regression (IWR) for both square loss and logistic loss, as described in \citet{DBLP:journals/jmlr/BiettiAL21}, i.e., the oracle performs Online Gradient Descent (OGD) \cite{DBLP:conf/nips/BartlettHR07} updates with an adaptive step size that is controlled by the learning-rate hyperparameter, following \citet{DBLP:journals/jmlr/DuchiHS11}, normalization as in \citet{DBLP:conf/uai/RossML13}, and importance weighting as in \citet{DBLP:conf/uai/KarampatziakisL11}.
In addition, VW applies a Cost-Sensitive One-Against-All (CSOAA) reduction for multiclass classification, which ultimately yields label predictions for each action. All tested algorithms are constrained to use this IWR-based oracle exclusively, which in the code reflected by the `mtr' cb-type parameter.

\noindent\textbf{Losses and Function Class.}
Regarding the losses and function class used, we again adopt the same setup tested by \citet{foster2021efficient,DBLP:journals/jmlr/BiettiAL21}.
Hence, we test all the algorithms when applying the online logistic regression oracle using the log loss. We also test all of the algorithms, excluding \texttt{FastCB}, using the square loss. The two mathematically defined bellow, for a true label $y$ and predicted label $\hat y$. 
\begin{align*}
    &\ell_{\text{sq}}(\hat y, y) = (\hat y-y)^2.
    \\
    &\ell_{\text{log}}(\hat y, y) = y\log( 1/ \hat y ) + (1-y)\log(1/1-\hat y).
\end{align*}
We note that the log-loss (or cross-entropy) in this notation is defined for binary classification, i.e., $y,\hat y \in \{0,1\}$, and we next extend it to multiclass classification using the sigmoid link-function, which formally defined as
\begin{align*}
    \sigma(x) = \frac{1}{1 + \exp(-x)}.
\end{align*}
Then,
$    \ell_{\text{logistic}}(\hat y, y) = \ell_{\text{log}}(\sigma(\hat y), y) .$ 
See \citet{DBLP:journals/corr/abs-1803-09349} for more information regarding this equivalence. 
\noindent
When applying logistic regression, we define the function class $\F$ as the class of generalized linear models, formally given by
$    \F = \brk[c]*{(c,a) \to \sigma\brk*{\brk[a]*{w,\phi(c,a)}}| w \in \R^d},$ 
where $\sigma(\cdot)$ is the logistic link function defined above, and $\phi: \C \times \A \to \R^d$ is a dataset-related feature map. 

When using the squared loss, we instead choose $\F$ to be the class of linear functions
\begin{align*}
     \F = \brk[c]*{(c,a) \to \brk[a]*{w,\phi(c,a)}| w \in \R^d}.
\end{align*}

All of these implementation choices are already supported by the Vowpal Wabbit library, and also performed by \citet{foster2021efficient}\footnote{For additional details, see Appendix D.2 in \citet{foster2021efficient}.}.



\subsection{Implementation Details}\label{Appendix-subsec:implementation-details}
Bellow we elaborate on the implementation and hyper-parameters tested for each algorithm.

\noindent\textbf{Implementation of known algorithms.}
For the implementations and experimental setup of \texttt{SquareCB}, \texttt{FastCB}, \texttt{Supervised} we use the source code provided by \citet{foster2021efficient}\footnote{The code is publicly available at \url{https://openreview.net/forum?id=3qYgdGj9Svt}}. We use the hyperparameter values tested in this work, among them $\rho \in \{0.25,0.5\}, \gamma_0  \in \{1000, 700, 400, 100, 50, 10\}$. Due to limited computational resources, we use as learning rates a subset of the learning rates tested by \citet{foster2021efficient} and mentioned in \cref{fig:all_params}.
\\
For \texttt{RegCB} we use the standard implementation of it in Vowpal Wabbit, with the hyper-parameters choice of  $c_0 \in 10^{\{-1,-2,-3\}}$ suggested by \citet{DBLP:journals/jmlr/BiettiAL21}.
\\
For \texttt{AdaCB}, we use the \texttt{SquareCB} implementation of Vowpal Wabbit, which includes elimination option that uses confidence bounds computed as for \texttt{RegCB} to eliminate sub-optimal actions, and then applies Inverse Gap Weightening (IGW)-based policy for the action selection itself.
We test the algorithm on the set of hyperparameter suggested by \citet{DBLP:conf/colt/FosterRSX21}: $c_0 \in 10^{\{-1,-2,-3\}}, \rho \in \{0.25,0.5\}, \gamma_0  \in \{1000, 700, 400, 100, 50, 10\}$.
For a summary of all hyper-parameter values used for tuning, see \cref{fig:all_params}.

As for the implementation of \texttt{\texttt{OPO-CMAB}}, we integrate it into the same source code ourself, and the implementation details are given next.

\noindent\textbf{\texttt{OPO-CMAB} implementation.}
We integrate \texttt{\texttt{OPO-CMAB}} into the Vowpal Wabbit library, implementing the exact pseudo-code provided in \cref{alg:C-PPO-MAB}, with the following practical modifications:
\\
We define the bonus factor at each round $k$ as $\beta_k = \gamma \sqrt{\frac{k}{\abs{\A}}}$, where $\gamma$ is a tuned hyperparameter. This differs from the constant $\beta = \sqrt{\frac{34 K  \log(4\abs{\F}K^3/\delta)}{\abs{\A}}}$ used in our theoretical analysis. 
This change is motivated by the following:
\begin{enumerate}
    \item \emph{Adaptivity to unknown horizon.} The new form of $\beta_k$ allows the algorithm to operate without prior knowledge of the total number of rounds $K$.

    \item \emph{Practical tuning and computational efficiency.} The constant term in the original $\beta$ expression, $\sqrt{34\log(4\abs{\F}K^3/\delta)}$, may be difficult to compute when $\F$ is infinite or extremely large. We replace it with a tunable parameter $\gamma$ to allow for more practical and flexible implementation of the confidence bounds width factor.
\end{enumerate}
Similarly, we treat the parameter $\eta$ as a tuned hyperparameter rather than constant that depends on $K,\abs{\A}$, for the same reasons as above.
We note that our algorithm, similarly to the others, have an additional learning rate parameter, that uses the base-learner oracle as an initial step size. We, similarly to the other algorithm, refer the oracle's learning rate as a tuned hyper-parameter. 

An important difference between our algorithm and the others is that \texttt{\texttt{OPO-CMAB}} requires evaluating each new context using all past predictors, in order to compute exploration bonuses based on past counterfactual policies. To enable this functionality without interfering with the internal implementation of the base learner, we cache the learned weights at each round and use them to generate past predictions for the current context. 
This memory and run time-consuming implementation is due to the constraints implied by the oracle implementation of Vowpal Wabbit library.  

Lastly, we remark that although \texttt{\texttt{OPO-CMAB}} was originally designed for the squared loss, the dataset used in this experimental setup involves multi-class and multi-label classification. In such settings, the logistic loss is often more appropriate, as it better reflects the probabilistic nature of the labels and can yield more accurate loss estimates in practice compared to the squared loss. Hence, we apply our algorithm using the logistic-loss as well, to provide it with more accurate predictors.

Apart from these changes, the rest of the implementation adheres closely to the description in \cref{alg:C-PPO-MAB}. The hyper-parameter values used for tuning  \texttt{\texttt{OPO-CMAB}} also appear in \cref{fig:all_params}.

\begin{remark}
  We note that the adaptive $\beta_k =O\brk*{\gamma \sqrt{\frac{k}{\abs{\A}}}}$ choice can be proven to yield optimal regret bound for stochastic CMAB (see \citet{xu2020upper} for more information), however, for clean representation of the algorithm and main theoretical result, we chose in a static $\beta$ parameter.  
\end{remark}

\subsection{Experiments Description and Evaluation}

In the following, we describe our experimental setup. Due to computational limitations, we restrict our evaluation to the following experimental setup.

\noindent\textbf{Tested Datasets.}
We evaluated all the considered algorithms on 18 relatively-small size  selected out of the large set of 515 multiclass classification datasets defined in the Vowpal Wabbit suite\footnote{All datasets are publicly available to download from OpenML collection \url{https://www.openml.org}}. The tested datasets are specified in \cref{fig:best-params}.
In each dataset, every context is associated with a true label. Following prior works, we simulate bandit feedback by defining the agent's loss to be $0$ if it predicts the correct label, and $1$ otherwise.

\noindent\textbf{Performance Evaluation.} 
We again follow \citet{foster2021efficient,DBLP:journals/jmlr/BiettiAL21} and evaluate the performance of each algorithm on a given dataset using the final \emph{Progressive Validation (PV)} loss \cite{DBLP:conf/colt/BlumKL99}, which, for an algorithm $\texttt{A}$ and a given dataset containing $K$ examples, formally defined as
\begin{align*}
    L_{PV}(\texttt{A},K) = \frac{1}{K}\sum_{k=1}^K \ell_k(a_k)
    .
\end{align*}
In our experiments, we measure the decrease in the PV-loss as a function of the number of examples.

\begin{remark}
    For any two algorithms \texttt{A}, \texttt{B}, let $\regret^{\texttt{A}}_K$ and $\regret^{\texttt{B}}_K$ denote their regret computed using the binary loss defined above on a given dataset.
    Then, it holds true that 
    \begin{align*}
        \frac{1}{K}\brk*{\regret^{\texttt{A}}_K - \regret^{\texttt{B}}_K}
        =
        L_{PV}(\texttt{A},K) - L_{PV}(\texttt{B},K)
        .
    \end{align*}
    Hence, in this setting, the PV-loss is a representative quantity for the regret differences between any pair of algorithms.  
\end{remark}

In addition, we also consider the difference of the final averaged-across-permutations PV-loss of each algorithm compare to that of the supervised learning mild baseline.

\noindent\textbf{Hyper-parameters Tuning.}
We tuned all algorithms by running all the combinations of the the parameters specified in \cref{fig:all_params}. 
For each algorithm, we run each combination once on each dataset, and choose the combination that provides the lowest final PV-loss. Then, we use only this best combination to run our experiments. The chosen parameter configurations are specified in \cref{fig:all_params}.

\begin{table}[ht]
\small
\centering
\begin{adjustbox}{width=0.9\textwidth}
\begin{tabular}{@{}lllllr r@{}}
\toprule
\textbf{Algorithm} & $\gamma / \gamma_0$ & $\rho$ & $c_0$ & $\eta$ & \textbf{Loss} & \textbf{LRs} \\
\midrule
Supervised & – & – & – & – & $\ell_{\text{logistic}}$ & $10^{\{1,0,-1,-2,-3\}}$ \\
SquareCB & 
1000, 700, 400, 100, 50, 10 & 
0.5, 0.25 & 
– & 
– & 
$\ell_{\text{sq}}, \ell_{\text{logistic}}$ & 
$10^{\{1,0,-1,-2,-3\}}$ \\
FastCB & 
1000, 700, 400, 100, 50, 10 & 
0.5, 0.25 & 
– & 
– & 
$\ell_{\text{logistic}}$ & 
$10^{\{1,0,-1,-2,-3\}}$ \\
AdaCB & 
1000, 700, 400, 100, 50, 10 & 
0.5, 0.25 & 
$10^{\{-1,-2,-3\}}$ & 
– & 
$\ell_{\text{sq}}, \ell_{\text{logistic}}$ & 
$10^{\{1,0,-1,-2,-3\}}$ \\
RegCB & 
– & 
– & 
$10^{\{-1,-2,-3\}}$ & 
– & 
$\ell_{\text{sq}}, \ell_{\text{logistic}}$ & 
$10^{\{1,0,-1,-2,-3\}}$ \\
\texttt{OPO-CMAB} & 
$10^{\{0,-1,-2\}}$ & 
– & 
– & 
100, 10, 1, 0.2, 0.1, 0.01 & 
$\ell_{\text{sq}}, \ell_{\text{logistic}}$ & 
$10^{\{1,0,-1,-2,-3\}}$ \\
\bottomrule
\end{tabular}
\end{adjustbox}
\caption{All tested hyperparameter values per algorithm.}\label{fig:all_params}
\end{table}

\begin{table}[ht]
\centering
\begin{adjustbox}{width=\textwidth}
\begin{tabular}{l p{2.7cm} p{3.1cm} p{3.1cm} p{3.1cm} p{3.1cm} p{3.1cm}}
\toprule
\textbf{Dataset} & \textbf{Supervised} & \textbf{SquareCB} & \textbf{FastCB} & \textbf{AdaCB} & \textbf{RegCB} & \textbf{OPOCMAB} \\
\midrule
1006    & $\ell_{\text{logistic}}$, lr: 10
           & $\rho$: 0.5, $\gamma_0$: 10, $\ell_{\text{logistic}}$, lr: 10
           & $\rho$: 0.25, $\gamma_0$: 50, $\ell_{\text{logistic}}$, lr: 10
           & $\rho$: 0.5, $\gamma_0$: 100, $c_0$: 0.001, $\ell_{\text{logistic}}$, lr: 0.01
           & $c_0$: 0.001, $\ell_{\text{sq}}$, lr: 1
           & $\eta$: 100, $\gamma$: 1, $\ell_{\text{logistic}}$, lr: 0.1 \\

1012    & $\ell_{\text{logistic}}$, lr: 1
           & $\rho$: 0.5, $\gamma_0$: 50, $\ell_{\text{logistic}}$, lr: 0.1
           & $\rho$: 0.5, $\gamma_0$: 1000, $\ell_{\text{logistic}}$, lr: 0.001
           & $\rho$: 0.5, $\gamma_0$: 50, $c_0$: 0.01, $\ell_{\text{logistic}}$, lr: 0.1
           & $c_0$: 0.1, $\ell_{\text{logistic}}$, lr: 0.1
           & $\eta$: 1, $\gamma$: 0.01, $\ell_{\text{sq}}$, lr: 0.1 \\

1015    & $\ell_{\text{logistic}}$, lr: 0.1
           & $\rho$: 0.25, $\gamma_0$: 1000, $\ell_{\text{sq}}$, lr: 10
           & $\rho$: 0.25, $\gamma_0$: 100, $\ell_{\text{logistic}}$, lr: 0.1
           & $\rho$: 0.5, $\gamma_0$: 100, $c_0$: 0.1, $\ell_{\text{logistic}}$, lr: 0.1
           & $c_0$: 0.001, $\ell_{\text{sq}}$, lr: 1
           & $\eta$: 100, $\gamma$: 0.1, $\ell_{\text{sq}}$, lr: 0.1 \\

1062  & $\ell_{\text{logistic}}$, lr: 1
           & $\rho$: 0.25, $\gamma_0$: 10, $\ell_{\text{logistic}}$, lr: 1
           & $\rho$: 0.25, $\gamma_0$: 400, $\ell_{\text{logistic}}$, lr: 0.1
           & $\rho$: 0.5, $\gamma_0$: 100, $c_0$: 0.01, $\ell_{\text{logistic}}$, lr: 0.1
           & $c_0$: 0.001, $\ell_{\text{logistic}}$, lr: 0.01
           & $\eta$: 100, $\gamma$: 0.01, $\ell_{\text{sq}}$, lr: 1 \\

1073  & $\ell_{\text{logistic}}$, lr: 0.01
           & $\rho$: 0.25, $\gamma_0$: 10, $\ell_{\text{logistic}}$, lr: 1
           & $\rho$: 0.5, $\gamma_0$: 10, $\ell_{\text{logistic}}$, lr: 10
           & $\rho$: 0.25, $\gamma_0$: 1000, $c_0$: 0.1, $\ell_{\text{logistic}}$, lr: 0.001
           & $c_0$: 0.001, $\ell_{\text{sq}}$, lr: 10
           & $\eta$: 1, $\gamma$: 0.01, $\ell_{\text{logistic}}$, lr: 1 \\

1084    & $\ell_{\text{logistic}}$, lr: 0.001
           & $\rho$: 0.5, $\gamma_0$: 100, $\ell_{\text{sq}}$, lr: 0.1
           & $\rho$: 0.5, $\gamma_0$: 10, $\ell_{\text{logistic}}$, lr: 0.001
           & $\rho$: 0.5, $\gamma_0$: 50, $c_0$: 0.1, $\ell_{\text{logistic}}$, lr: 0.1
           & $c_0$: 0.1, $\ell_{\text{logistic}}$, lr: 0.001
           & $\eta$: 10, $\gamma$: 0.01, $\ell_{\text{logistic}}$, lr: 0.001 \\

339     & $\ell_{\text{logistic}}$, lr: 1
           & $\rho$: 0.5, $\gamma_0$: 700, $\ell_{\text{sq}}$, lr: 0.001
           & $\rho$: 0.5, $\gamma_0$: 700, $\ell_{\text{logistic}}$, lr: 10
           & $\rho$: 0.5, $\gamma_0$: 100, $c_0$: 0.1, $\ell_{\text{logistic}}$, lr: 0.01
           & $c_0$: 0.1, $\ell_{\text{logistic}}$, lr: 10
           & $\eta$: 100, $\gamma$: 0.01, $\ell_{\text{sq}}$, lr: 10 \\

346     & $\ell_{\text{logistic}}$, lr: 1
           & $\rho$: 0.5, $\gamma_0$: 1000, $\ell_{\text{logistic}}$, lr: 0.001
           & $\rho$: 0.5, $\gamma_0$: 1000, $\ell_{\text{logistic}}$, lr: 0.001
           & $\rho$: 0.5, $\gamma_0$: 50, $c_0$: 0.1, $\ell_{\text{logistic}}$, lr: 0.001
           & $c_0$: 0.01, $\ell_{\text{logistic}}$, lr: 0.001
           & $\eta$: 10, $\gamma$: 1, $\ell_{\text{logistic}}$, lr: 10 \\

457     & $\ell_{\text{logistic}}$, lr: 1
           & $\rho$: 0.5, $\gamma_0$: 100, $\ell_{\text{logistic}}$, lr: 0.01
           & $\rho$: 0.25, $\gamma_0$: 700, $\ell_{\text{logistic}}$, lr: 0.001
           & $\rho$: 0.5, $\gamma_0$: 1000, $c_0$: 0.01, $\ell_{\text{logistic}}$, lr: 0.001
           & $c_0$: 0.1, $\ell_{\text{sq}}$, lr: 10
           & $\eta$: 100, $\gamma$: 0.1, $\ell_{\text{logistic}}$, lr: 10 \\

476     & $\ell_{\text{logistic}}$, lr: 0.1
           & $\rho$: 0.5, $\gamma_0$: 700, $\ell_{\text{sq}}$, lr: 0.1
           & $\rho$: 0.25, $\gamma_0$: 100, $\ell_{\text{logistic}}$, lr: 0.1
           & $\rho$: 0.25, $\gamma_0$: 10, $c_0$: 0.001, $\ell_{\text{logistic}}$, lr: 1
           & $c_0$: 0.001, $\ell_{\text{sq}}$, lr: 0.1
           & $\eta$: 100, $\gamma$: 0.01, $\ell_{\text{sq}}$, lr: 10 \\

729     & $\ell_{\text{logistic}}$, lr: 0.001
           & $\rho$: 0.5, $\gamma_0$: 1000, $\ell_{\text{logistic}}$, lr: 0.1
           & $\rho$: 0.5, $\gamma_0$: 1000, $\ell_{\text{logistic}}$, lr: 0.01
           & $\rho$: 0.25, $\gamma_0$: 10, $c_0$: 0.01, $\ell_{\text{logistic}}$, lr: 0.001
           & $c_0$: 0.1, $\ell_{\text{sq}}$, lr: 0.001
           & $\eta$: 0.1, $\gamma$: 0.1, $\ell_{\text{sq}}$, lr: 1 \\

785     & $\ell_{\text{logistic}}$, lr: 0.1
           & $\rho$: 0.5, $\gamma_0$: 10, $\ell_{\text{sq}}$, lr: 0.1
           & $\rho$: 0.5, $\gamma_0$: 10, $\ell_{\text{logistic}}$, lr: 0.1
           & $\rho$: 0.5, $\gamma_0$: 700, $c_0$: 0.001, $\ell_{\text{logistic}}$, lr: 0.001
           & $c_0$: 0.01, $\ell_{\text{sq}}$, lr: 0.1
           & $\eta$: 1, $\gamma$: 1, $\ell_{\text{logistic}}$, lr: 10 \\

835     & $\ell_{\text{logistic}}$, lr: 1
           & $\rho$: 0.5, $\gamma_0$: 400, $\ell_{\text{sq}}$, lr: 0.001
           & $\rho$: 0.5, $\gamma_0$: 1000, $\ell_{\text{logistic}}$, lr: 0.01
           & $\rho$: 0.5, $\gamma_0$: 1000, $c_0$: 0.001, $\ell_{\text{logistic}}$, lr: 1
           & $c_0$: 0.01, $\ell_{\text{logistic}}$, lr: 1
           & $\eta$: 100, $\gamma$: 1, $\ell_{\text{sq}}$, lr: 1 \\

848     & $\ell_{\text{logistic}}$, lr: 0.001
           & $\rho$: 0.5, $\gamma_0$: 700, $\ell_{\text{logistic}}$, lr: 0.1
           & $\rho$: 0.5, $\gamma_0$: 700, $\ell_{\text{logistic}}$, lr: 0.1
           & $\rho$: 0.5, $\gamma_0$: 700, $c_0$: 0.1, $\ell_{\text{logistic}}$, lr: 0.01
           & $c_0$: 0.1, $\ell_{\text{logistic}}$, lr: 1
           & $\eta$: 100, $\gamma$: 0.01, $\ell_{\text{sq}}$, lr: 0.01 \\

874     & $\ell_{\text{logistic}}$, lr: 1
           & $\rho$: 0.5, $\gamma_0$: 400, $\ell_{\text{logistic}}$, lr: 0.01
           & $\rho$: 0.5, $\gamma_0$: 1000, $\ell_{\text{logistic}}$, lr: 0.001
           & $\rho$: 0.5, $\gamma_0$: 10, $c_0$: 0.01, $\ell_{\text{logistic}}$, lr: 10
           & $c_0$: 0.1, $\ell_{\text{sq}}$, lr: 1
           & $\eta$: 1, $\gamma$: 0.1, $\ell_{\text{sq}}$, lr: 1 \\

905     & $\ell_{\text{logistic}}$, lr: 10
           & $\rho$: 0.5, $\gamma_0$: 700, $\ell_{\text{logistic}}$, lr: 0.1
           & $\rho$: 0.5, $\gamma_0$: 700, $\ell_{\text{logistic}}$, lr: 0.1
           & $\rho$: 0.25, $\gamma_0$: 700, $c_0$: 0.001, $\ell_{\text{logistic}}$, lr: 0.001
           & $c_0$: 0.01, $\ell_{\text{sq}}$, lr: 10
           & $\eta$: 100, $\gamma$: 1, $\ell_{\text{sq}}$, lr: 1 \\

928     & $\ell_{\text{logistic}}$, lr: 1
           & $\rho$: 0.25, $\gamma_0$: 50, $\ell_{\text{logistic}}$, lr: 0.01
           & $\rho$: 0.25, $\gamma_0$: 700, $\ell_{\text{logistic}}$, lr: 0.001
           & $\rho$: 0.25, $\gamma_0$: 10, $c_0$: 0.01, $\ell_{\text{logistic}}$, lr: 0.001
           & $c_0$: 0.1, $\ell_{\text{logistic}}$, lr: 0.001
           & $\eta$: 0.1, $\gamma$: 0.1, $\ell_{\text{logistic}}$, lr: 1 \\

964     & $\ell_{\text{logistic}}$, lr: 0.001
           & $\rho$: 0.25, $\gamma_0$: 100, $\ell_{\text{sq}}$, lr: 0.01
           & $\rho$: 0.5, $\gamma_0$: 700, $\ell_{\text{logistic}}$, lr: 0.01
           & $\rho$: 0.5, $\gamma_0$: 50, $c_0$: 0.01, $\ell_{\text{logistic}}$, lr: 0.001
           & $c_0$: 0.001, $\ell_{\text{logistic}}$, lr: 0.01
           & $\eta$: 0.1, $\gamma$: 0.1, $\ell_{\text{sq}}$, lr: 10 \\
\bottomrule
\end{tabular}
\end{adjustbox}
\caption{Best hyperparameter configurations per algorithm and dataset.}\label{fig:best-params}
\end{table}

\subsection{Experiments and Results}

\noindent\textbf{Experimental setup and results.}
For each algorithm and dataset, we run the best hyperparameter configuration (as found in our tuning) on 10 random permutations of the dataset. The results are summarized in the following tables and plots.

First, we average the PV-loss at each time step across permutations to measure the average performance of each algorithm as a function of the number of observed examples. In \cref{fig:avg_performance}, we show the averaged PV-loss curves as well as the Std on three representative datasets, chosen because in them the supervised baseline converges to a significantly better-than-random-policy final PV-loss.

\cref{fig:differences} presents a table summarizing, for each dataset, the difference in final averaged-across-permutations PV-loss between each algorithm and the supervised baseline. The algorithm with the best mean difference relative to the supervised baseline is highlighted. Negative numbers indicate the algorithm outperforms supervised; positive numbers show how close it comes to the supervised baseline.

\noindent\textbf{Discussion on \cref{fig:differences} results.}
\cref{fig:differences} shows that across all datasets, \texttt{AdaCB}, \texttt{FastCB}, \texttt{\texttt{OPO-CMAB}} and \texttt{RegCB} are closest to (and sometimes outperforms) the supervised baseline in 4 out of 18 datasets each, followed by 
\texttt{SquareCB} lag behind, with only 2 datasets out of the 18, for which is the closed to supervised. As in most datasets the difference between the tested algorithms are small, there is no clear evidence that one algorithm significantly outperform the others.


Overall, \cref{fig:differences} supports the hypothesis that all algorithms perform comparably considering those datasets, with no  significant evidence that any one algorithm consistently outperforms the others.

\begin{figure}[ht]
    \centering
    \includegraphics[width=0.7\textwidth]{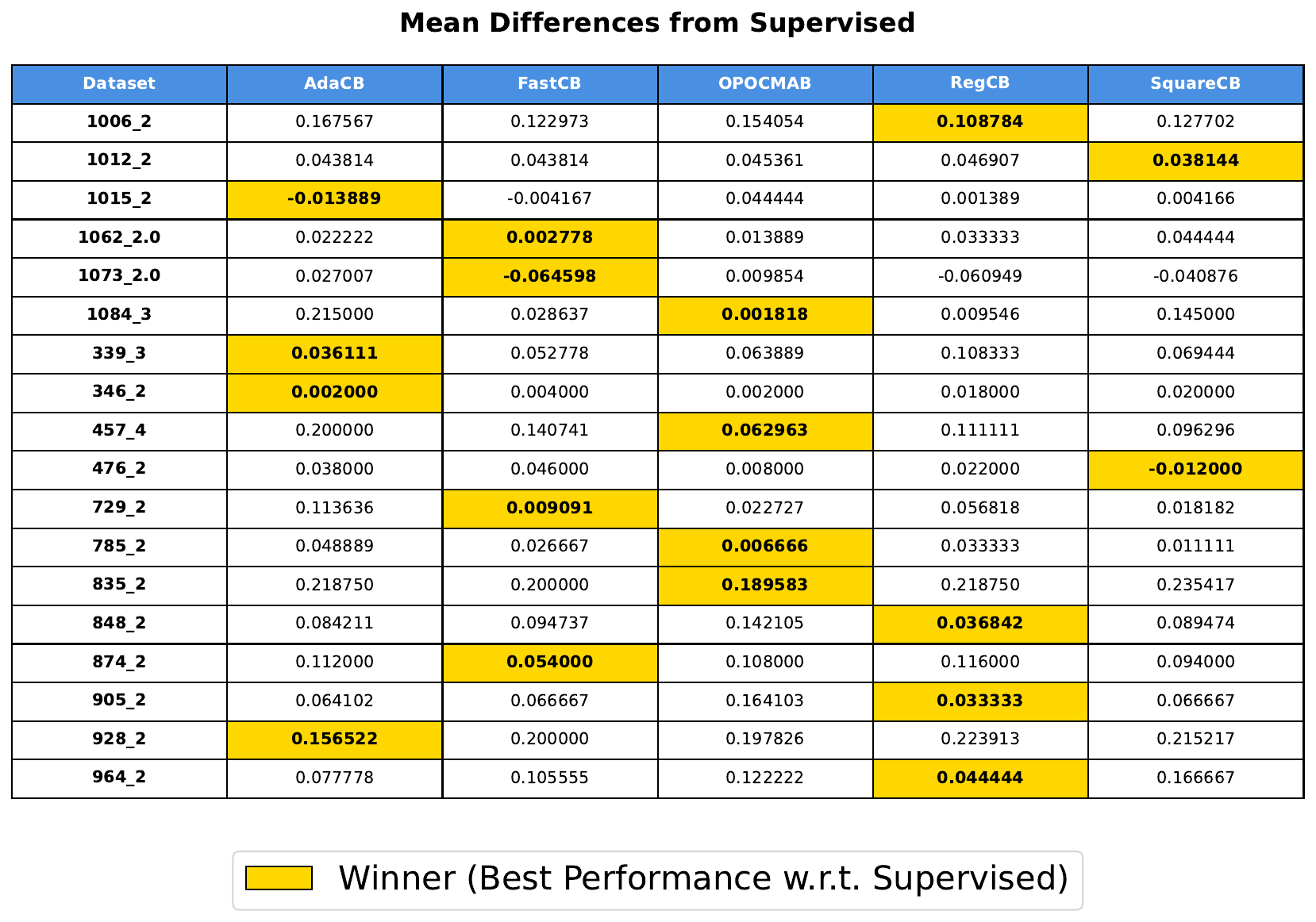}
    \caption{Mean Difference from Supervised baseline.}
    \label{fig:differences}
\end{figure}

\subsection{Resources and Computation}
%
%
%
%
%
%

\noindent\textbf{Assets.}
We used the following assets to conduct our experiments.
\\
The code for the Vowpal Wabbit library, which is publicly available at \url{https://vowpalwabbit.org/}, includes implementations of all tested CMAB algorithms, with the exception of \texttt{FastCB}.

The implementation of \texttt{FastCB}, as well as the full experimental setup and evaluation code, were obtained from the source code accompanying \citet{foster2021efficient}, which is publicly available at 
\url{https://openreview.net/forum?id=3qYgdGj9Svt}.\\
Building on this foundation, we extended the codebase by adding our own implementations and modifications. As noted by the  \citet{foster2021efficient}, their code builds upon the CMAB evaluation framework developed in \citet{DBLP:journals/jmlr/BiettiAL21}, which is also publicly available at  \url{https://github.com/vowpalwabbit/vowpal_wabbit/}.

Following previous works, we use in our experiments 18 datasets from the 515 multiclass and multi-label classification datasets used for CMAB evaluation from the OpenML collection. The datasets are publicly available for download at \url{https://www.openml.org}. 
\\
See dataset 1015 here:
\url{https://www.openml.org/search?type=data\&sort=runs\&id=1015\&status=active}, dataset 1062 here:
\url{https://www.openml.org/search?type=data\&sort=runs\&id=1062\&status=active}, 
and dataset 1084 here:
\url{https://www.openml.org/search?type=data\&sort=runs\&id=1084\&status=active}.

\noindent\textbf{Computation resources.}
Experiments were conducted on a Linux CPU server with approximately 250 cores, from which we used 40-100 cores only. The total compute time required to run all the presented experiments was approximately 48 hours. The memory required to run the tests was about  1.5 TB.


\end{document}

%% file: macros.tex
\newcommand{\D}{\mathcal{D}} 
\newcommand{\C}{\mathcal{C}} 
\newcommand{\A}{\mathcal{A}} 
\newcommand{\F}{\mathcal{F}} 

\newcommand{\OLSR}{\mathcal{O}_{\text{sq}}} 

\newcommand{\X}{\mathcal{X}}

\newcommand{\regret}{\mathcal{R}}

%% file: mathdefs.tex
\usepackage{amsmath}        
\usepackage{booktabs}       
\usepackage{graphicx}       
\usepackage{array}      

\usepackage{subcaption}

\usepackage{lmodern}        
\usepackage[T1]{fontenc}    
\usepackage[utf8]{inputenc} 

\usepackage{adjustbox}
\usepackage{makecell}

\newcommand{\R}{\mathbb{R}} 
\newcommand{\N}{\mathbb{N}} 

\newcommand{\E}{\mathbb{E}} 

\DeclareMathOperator*{\argmin}{arg\,min}


%% file: thmdefs.tex
\theoremstyle{plain}
\newtheorem{theorem}{Theorem}[section]

\newtheorem{lemma}[theorem]{Lemma}

\newtheorem{corollary}[theorem]{Corollary}
\theoremstyle{definition}

\newtheorem{assumption}[theorem]{Assumption}
\theoremstyle{remark}
\newtheorem{remark}[theorem]{Remark}

\newtheorem{fact}[theorem]{Fact}